\documentclass[twoside]{article}

\usepackage[accepted]{aistats2021}

\usepackage{xcolor}
\definecolor{linkcolor}{RGB}{0,128,255}
\usepackage[colorlinks=true,allcolors=linkcolor,pageanchor=true,plainpages=false,pdfpagelabels,bookmarks,bookmarksnumbered]{hyperref}

\usepackage{AVTcommands}
\usepackage{AVTenvironments}
\usepackage[authoryear]{AVTcitations}
\usepackage{microtype}
\usepackage[bottom]{footmisc}

\usepackage{subfigure}
\usepackage{xfrac}
\usepackage{algorithm}
\usepackage{algorithmic}
\usepackage{amsthm}
\usepackage{dblfloatfix}

\newtheorem{theorem}{Theorem}
\newtheorem{proposition}[theorem]{Proposition}
\newtheorem{result}[theorem]{Result}

\newtheorem{definition}[theorem]{Definition}

\bibliography{references.bib}

\runningauthor{Samuel Cohen, Giulia Luise, Alexander Terenin, Brandon Amos, Marc Peter Deisenroth}

\begin{document}
\twocolumn[

\aistatstitle{Aligning Time Series on Incomparable Spaces}

\aistatsauthor{ Samuel Cohen \And Giulia Luise \And Alexander Terenin}
\aistatsaddress{Centre for Artificial Intelligence\\ University College London \And Department of Computing \\ Imperial College London \And Department of Mathematics\\Imperial College London}

\aistatsauthor{Brandon Amos \And Marc Peter Deisenroth}
\aistatsaddress{Facebook AI Research \And Centre for Artificial Intelligence\\ University College London } ]

\begin{abstract}
Dynamic time warping (DTW) is a useful method for aligning, comparing and combining time series, but it requires them to live in comparable spaces.
In this work, we consider a setting in which time series live on different spaces without a sensible ground metric, causing DTW to become ill-defined.
To alleviate this, we propose Gromov dynamic time warping (GDTW), a distance between time series on potentially incomparable spaces that avoids the comparability requirement by instead considering intra-relational geometry.
We demonstrate its effectiveness at aligning, combining and comparing time series living on incomparable spaces.
We further propose a smoothed version of GDTW as a differentiable loss and assess its properties in a variety of settings, including barycentric averaging, generative modeling and imitation learning.
\end{abstract}

\section{Introduction}

Data is often gathered sequentially in the form of a time series, which consists of a sequence of data points observed at successive time points.
Elements of such sequences are correlated through time, and comparing time series requires one to take the direction of time into account.
To define a meaningful similarity measure between time series, \textcite{saoke78} proposed \emph{dynamic time warping} (DTW), a distance over the space of time series.
DTW consists of a minimal-cost alignment problem and is solved efficiently via dynamic programming.

Dynamic time warping enables one to tackle a large range of temporal problems, including aligning, comparing, and averaging time series.
In particular, DTW can be employed as a loss function within larger learning frameworks: in this setting, \textcite{pmlr-v70-cuturi17a} propose \emph{soft DTW}, which consists of a smoothed DTW objective possessing a differentiable gradient which can result in better behavior when employing gradient-based methods \cite{softdtw_gak}.

\begin{table}[b!]
\footnotesize Code available at: \url{https://github.com/samcohen16/Aligning-Time-Series}.
\vspace*{6.19ex}
\end{table}

DTW and its variants require a sensible cost function to be defined between samples from the two time series.
The specification of such cost functions is often hard, and limits the applicability of DTW.
For example, in cases where the time series are invariant under symmetries, such as sequences of word embeddings which are only identified up to a rotation of latent space, one needs to solve a spatial alignment problem to compare the two sequences sensibly.

\textcite{ctw} propose an extension of DTW that addresses this issue by jointly optimizing spatial and temporal projections that align the time series.  
\textcite{vayer2020time} introduce a similar extension of DTW that consists in making the cost invariant with respect to specific sets of invariances, such as for example rotations.
In these approaches, one still requires the definition of a cost function between samples from the two time series, along with a potentially large pre-defined set of transformations to optimize over. 
On the other hand, in multi-modal settings, one considers time series that live on incomparable spaces: for example, the configuration space of a robotic arm and its representation as pixels of a video frame.
In such cases, defining a sensible distance between different representations and a sensible space of symmetries is impractical, as it would require detailed understanding of the objects we wish to study.

In this work, we propose to tackle the incomparability and invariance problems simultaneously by relaxing our notion of equality in a manner inspired by recent ideas from the optimal transport literature.
Using connections between DTW and the Wasserstein distance \cite{Kantorovich2006}, we propose \emph{Gromov dynamic time warping} (GDTW), which compares two time series by contrasting their intra-relational geometries, analogously to the Gromov--Wasserstein distance of isometry classes of metric-measure spaces \cite{journals/focm/Memoli11}.
This allows one to compare two time series without requiring a similarity notion between their samples.
The resulting procedure automatically incorporates invariances into the distance, without requiring said invariances or symmetry-specific constraints to be manually specified.

\paragraph{Contributions.} (1) We introduce a new distance between time series that is well-defined on incomparable spaces with naturally built-in invariance to isometries, and (2) a smoothed extension with better-behaved gradients. (3) We propose an efficient Frank--Wolfe-inspired  algorithm for computing it, and (4) we apply Gromov DTW as a loss function in a wide range of settings, including barycentric averaging, generative modeling and imitation learning.

\paragraph{Notation.}
Let $(\c{X},d_{\c{X}})$ be a compact metric space, and let a \emph{time series} $\v{x}$ of length $T\in\N$ be an element of $\c{X}^T$.
Let $\c{A}(m,n) \subseteq \{0,1\}^{m\times n}$ be the set of \emph{alignment matrices}, which are binary matrices containing a path of ones from the top-left to the bottom-right corner, allowing only bottom, right or diagonal bottom-right moves.
Given a matrix $\m{A}\in \c{A}(m,n)$ and a 4-dimensional array $\m{L} \in \R^{m \x n \x m \x n}$, define the matrix $(\m{L}\ox\m{A})_{ij} = \del[1]{\sum_{kl}L_{ijkl} A_{kl}}_{ij}$.
Denote the Frobenius matrix inner product by $\innerprod{\cdot}{\cdot}_{\f{F}}$. Define the probability simplex $\Delta_J=\{ q \in \R^J, \,\,q_j\geq 0 \textnormal{ for }j=1,\dots,J,\,\,\sum_j q_j=1\}$. Finally, $\v{x}_{:i}$ corresponds to the first $i$ time steps of $\v{x}$.

\begin{figure*}
    \centering
    
  \includegraphics{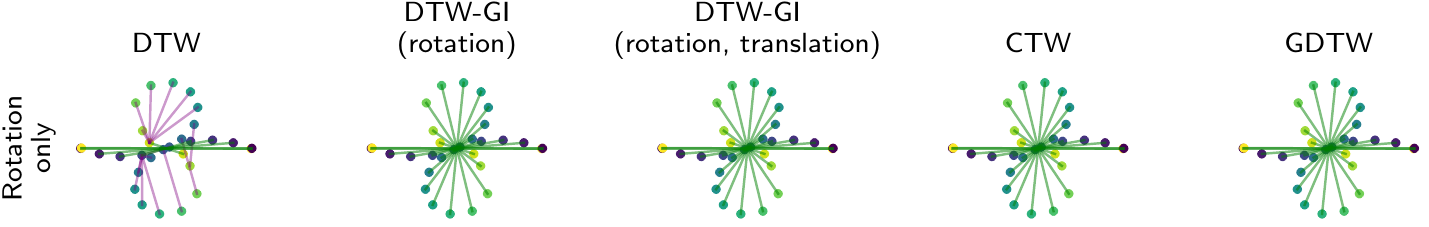}
  \\[0.5\baselineskip]
  \includegraphics{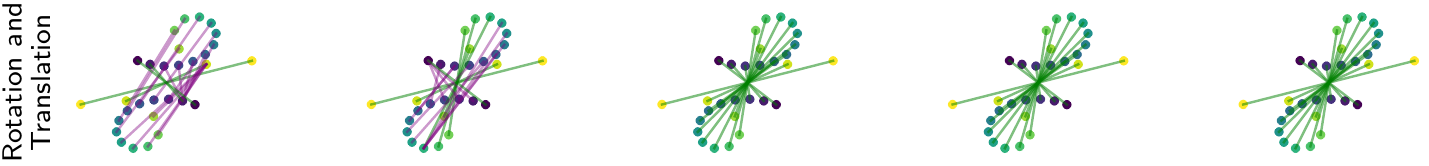}

    \caption{ Alignment of time series equivalent  up to rotation by 180 degrees (top), and  up to rotation and translation (bottom). Node coloring represents time (dark purple: $t=0$, yellow: $t=T$), and edge coloring represents alignment correctness (respectively green and purple for correct and incorrect matchings). The CTW and DTW-GI (rotation, translation) baselines recover the correct alignment. GDTW also recovers the correct alignments, but without needing to manually specify a cost function or symmetries. DTW-GI (rotation) fails in the translational setting, and DTW fails in both.}
    \label{fig:alignment_exp}
\end{figure*}

\section{Dynamic Time Warping for Time Series Alignment}
\label{sec:dtw}

 \textcite{saoke78} consider the problem of aligning two time series $\v{x} \in \c{X}^{T_x}$ and $\v{y} \in \c{X}^{T_y}$, where potentially $T_x \neq T_y$.
This is formalized as
\[
\label{eqn:dtw}
\f{DTW}(\v{x},\v{y}) = \min_{\m{A} \in \c{A}(T_x,T_y)} \innerprod{\m{D}}{\m{A}}_{\f{F}}
\]
where $D_{ij} = d_\c{X}(x_i,y_j)$ is the pairwise distance matrix.
This problem amounts to finding an alignment matrix that minimizes the total alignment cost.
The objective \eqref{eqn:dtw} can be computed in $O(T_xT_y)$ by leveraging the dynamic programming forward recursion
\[
\label{eqn:dynamic-programming}
\begin{aligned}
&\f{DTW}(\v{x}_{:i},\v{y}_{:j}) = d_{\c{X}}(x_i,y_j)
\\
&+\min \del{\f{DTW}_{i-1,j},\f{DTW}_{i-1,j-1}, \f{DTW}_{i,j-1}},
\end{aligned}
\]
where $\f{DTW}_{i,j} = \f{DTW}(\v{x}_{:i},\v{y}_{:j})$.
The optimal alignment matrix $\m{A}^*$ can then be obtained by tracking the optimal path backwards.
DTW is a more flexible choice for comparing time series than element-wise Euclidean distances, because it allows one to compare time series of different sampling frequencies due to its ability to "warp" time.
In particular, two time series can be close in DTW even if $T_x \neq T_y$.
DTW has been used in a number of settings, including time series averaging, clustering \cite{PETITJEAN201276,10.1016/j.patcog.2017.08.012} and feature extraction \cite{655778,10.1007/s10618-015-0418-x}.

A limitation of DTW is the discontinuity of its gradient, which can affect the performance of gradient descent algorithms. To address this, \textcite{pmlr-v70-cuturi17a} introduced a soft version of DTW.  The minimum in~\eqref{eqn:dtw} is replaced with a softened version, yielding
\[
\label{eq:sdtw}
\f{DTW}_\gamma(\v{x},\v{y})=-\gamma \log \sum_{\mathclap{\m{A} \in \c{A}(T_x,T_y)}} \exp\big({-\tfrac{1}{\gamma}\innerprod{\m{D}}{\m{A}}_{\f{F}}}\big)
.
\]
DTW is recovered in the limit $\gamma \-> 0$. They also discuss a softened version of the optimal alignment matrix $\m{A}^*$, given by the softened argmin
\[\label{eq:softargmin}
\!\!\!\operatorname*{{\argmin}^\gamma}_{\m{A}\in \mathcal{A}(T_x,T_y)}\!\innerprod{\m{D}}{\m{A}}_{\f{F}} = C_{\v{x},\v{y}}^{-1} \sum_{\mathclap{\m{A}\in \mathcal{A}(T_x,T_y)}}\exp\big({-\tfrac{1}{\gamma}\innerprod{\m{D}}{\m{A}}_{\f{F}}}\big)\m{A},\!
\!
\]
 where $\gamma\geq 0$ is a smoothing parameter and $C_{\v{x},\v{y}}$ is the normalizing constant of the unnormalized density $P(\m{A}) \propto \smash{\raisebox{-.2ex}{\ensuremath{e^{-\frac{1}{\gamma}\innerprod{\m{D}}{\m{A}}_{\f{F}}}}}}$.
While they consider temporal variability, DTW and soft DTW are not invariant under transformations, such as translations and rotations, which can limit their application to settings where time series are obtained only up to isometric transformations, such as word embeddings.
To alleviate this, \textcite{vayer2020time} propose
\[
\f{DTW-GI}(\v{x},\v{y})=\min_{f\in\c{F}}\f{DTW}(\v{x},f(\v{y})),
\]
which gives a distance between time series that is invariant under a set of transformations $\mathcal{F}$,
where $f$ is applied elementwise to points of the time series; \textcite{vayer2020time} consider orthonormal transformations, such as rotations.
In more general settings, this requires one to optimize over a potentially large space of transformations $\c{F}$, which becomes infeasible if $\v{x}$ and $\v{y}$ are too different.

Similarly to DTW-GI, Canonical Time Warping (CTW) \cite{ctw} consists of aligning the data temporally via DTW and spatially via canonical correlation analysis (CCA). CTW is defined as
\[
    \f{CTW}(\v{x},\v{y}) = \min_{\substack{\m{W}_x, \m{W}_y\\ \m{V}_x, \m{V}_y}} \Vert \m{V}_x\v{x}\m{W}_x +\m{V}_y\v{y}\m{W}_y  \Vert^2_{\f{F}}
\]
with constraints on matrices $\m{W}_x, \m{W}_y, \m{V}_x, \m{V}_y$, which make CTW invariant to translations, rotations and scaling at optimality. Optimization is performed by alternation on $\m{V}_x, \m{V}_y$ via DTW, and on  $\m{W}_x, \m{W}_y$ via CCA. In particular, the former matrices align $\v{x}$ and $\v{y}$ temporally whilst the latter ones align the time series spatially by projecting the temporally-aligned time series onto a common subspace on which they are maximally correlated. \textcite{gctw} generalize CTW to allow for the alignment of multiple time series, and \textcite{dctw} allow for nonlinear projections. \textcite{mandw} leverage manifold learning to align the time series spatially in conjuction with DTW for temporal alignment.

\subsection{Connecting DTW and Optimal Transport}
\label{sec:ot}
Optimal transport \cite{peyre2019computational} allows one to compare and average measures in a way that incorporates the geometry of the underlying space on which they are defined.
Such approaches can be intuitively connected to DTW by observing that time series are essentially discrete measures equipped with an ordering.
This allows one to view the alignment matrices in the DTW objective as analogues of coupling matrices that appear in the Kantorovich formulation of the classical optimal transport problem \cite{alma991005863149705596}. To formalize this, consider the Wasserstein distance between discrete measures.
Let $\mu_x=\sum_{i=1}^m p_i\delta_{x_i}$, $\mu_y=\sum_{i=1}^n q_i \delta_{y_i}$  be discrete probability measures with $\v{p} \in \Delta_{m}, \v{q} \in \Delta_{n}$, and set $D_{ij}=d_{\c{X}}(x_i,x_j)$.
Define the Wasserstein distance between discrete measures $\mu_x$ and $\mu_y$ as
\[
\label{eq:wass}
\f{W}(\mu_x,\mu_y)=\min_{\m{T} \in \Pi(\v{p},\v{q})} \innerprod{\m{D}}{\m{T}}_{\f{F}},
\]
where $\Pi(\v{p},\v{q})$ is the set of coupling matrices with marginals $\v{p}$ and $\v{q}$.
Equation \eqref{eq:wass} clearly resembles \eqref{eqn:dtw}, and in both cases the objective consists of the minimization of the element-wise dot product between a distance matrix and another matrix, which we term the \emph{plan}.
In the DTW case, the plan consists of an alignment matrix, and in the Wasserstein case it consists of a coupling matrix.
Moreover, the optimal coupling $T^*_{ij}$ describes the optimal amount of probability mass to move from point $x_i$ to $y_j$, whilst the optimal alignment $A^*_{ij}$ describes whether or not $x_i$ and $y_j$ are aligned at optimality. While tightly connected, DTW  and the Wasserstein distance between  time series’ support points are still different. For example, if we consider two time series with the same points but reversed ordering, these
 would be far away under DTW, but equal under Wasserstein.

The Wasserstein distance is limited by the requirement for a sensible ground metric $d_{\c{X}}$ to be defined between samples $x_i\in \mathcal{X}$ and $y_j \in \mathcal{Y}$, which is impossible  if there does not exist an explicit correspondence between samples from the compared measures \cite{Solomon16}.
The Wasserstein distance is also not invariant under isometries, such as rotations and translations, and generally leads to a large distance between measures equivalent up to such transformations.
To relax these requirements, \textcite{journals/focm/Memoli11} propose the \emph{Gromov--Wasserstein} (GW) distance between isometry classes of metric-measure triples $(\c{X},d_{\c{X}},\mu_x)$ and $(\c{Y},d_{\c{Y}},\mu_y)$. It is defined as
\[
\begin{aligned}
&\f{GW}(\mu_x,\mu_y)
\\
&=\min_{\m{T} \in \Pi(\v{p},\v{q})} \sum_{ijkl} \mathcal{L}\del[1]{d_{\c{X}}(x_i,x_k),d_{\c{Y}}(y_j,y_l)}T_{ij}T_{kl},
\end{aligned}
\]
where $\mathcal{L}$ is typically squared error loss, and does not rely on a cost or metric to compare $x_i$ with $y_j$.
Instead, GW compares the intra-relational metric geometries of the two measures by comparing the distributions of their pairwise distances. 
This only requires the definition of metrics $d_{\c{X}}$ and $d_{\c{Y}}$ on $\c{X}$ and $\c{Y}$, respectively, which can be arbitrarily different.
GW has been used as a tool for comparing measures on incomparable spaces, notably for training generative models \cite{pmlr-v97-bunne19a}, graph matching \cite{pmlr-v97-xu19b}, and graph averaging \cite{NIPS2019_8569}. 
\textcite{pmlr-v97-titouan19a} also propose \emph{fused Gromov--Wasserstein} to deal with structured objects such as graphs and time series, which consists of a mixture of Wasserstein distance on the node features (for example, time ordering), and GW on the spatial structure, which illustrates how these concepts can be mixed and matched as needed in the specific use case.

\section{Gromov Dynamic Time Warping}
\label{sec:gdtw}
Motivated by the connections between DTW and optimal transport described in Sections \ref{sec:dtw} and \ref{sec:ot}, respectively, we introduce a distance between time series $\v{x} \in \c{X}^{T_x}$ and $\v{y} \in \c{Y}^{T_y}$ defined on potentially incomparable compact metric spaces.
We define the \emph{Gromov dynamic time warping} distance between metric-time-series triples $(\c{X},d_{\c{X}},\v{x})$ and $(\c{Y},d_{\c{Y}},\v{y})$ as
\[
\begin{aligned}
&\f{GDTW}(\v{x},\v{y})
\\
&=\min_{\m{A} \in \c{A}(T_x,T_y)} \sum_{ijkl} \mathcal{L} \del[1]{d_{\c{X}}(x_i,x_k),d_{\c{Y}}(y_j,y_l)} A_{ij}A_{kl},
\end{aligned}
\]
where $\mathcal{L}:\R^2 \-> \R^+$ is a loss function measuring the alignment of the pairwise distances.  The first two elements of the metric-time-series triples are omitted to ease notation.
We think of $\mathcal{L}$ as a proxy for measuring the alignment of the time series (e.g., the square error loss $\mathcal{L}(a,b) = (a-b)^2$).
Under the optimal alignment, for any two pairs $(x_i,y_j)$ and $(x_k,y_l)$, if $x_i$ is close to $x_k$ then $y_j$ will tend to be close to $y_l$.

Provided $\mathcal{L}$ is a pre-metric and so induces a Hausdorff topology, GDTW possesses the following properties: 
\begin{enumerate}
    \item[(a)] $\f{GDTW}(\v{x}, \v{y}) \geq 0$, and  $\f{GDTW}(\v{x}, \v{x}) = 0$,
    \item[(b)] $\f{GDTW}(\v{x}, \v{y}) = 0$ if and only if there exists an isometry $\phi:\c{X}\rightarrow \c{Y}$ such that $\phi(\v{x}) = \v{y}$,
    \item[(c)] $\f{GDTW}(\v{y}, \v{x}) = \f{GDTW}(\v{x}, \v{y})$ if and only if $\c L$ is symmetric.
\end{enumerate}
Mirroring DTW, GDTW does not generally satisfy the triangle inequality.
Thus, GDTW is a pre-metric over equivalence classes of $(\c{X},d_{\c{X}},\v{x})$ triples, up to metric isometry.
A formal treatment is given in Appendix \ref{sec:app_theory}.

Some optimal alignments are given in Figure \ref{fig:alignment_exp}. The original version of DTW-GI (rotationally invariant) fails in the translational case, while its translational extension, obtained by subtracting a bias from both time series, works in both cases---here, invariances have to be manually specified. CTW works in both settings, but invariances are also manually specified by the constraints imposed in the optimization of the learned spatial projections. 
GDTW recovers the correct alignments in both cases without explicitly specifying the symmetries. 

\begin{algorithm}[t]
\caption{Frank--Wolfe-inspired algorithm for Gromov DTW}
\label{alg:gdtw}
\begin{algorithmic}
\STATE Initialize $\m{A}\in \c{A}(T_x,T_y)$ arbitrarily, and compute $L_{ijkl}=\c{L}\del[1]{d_{\c{X}}(x_i,x_k),d_{\c{Y}}(y_j,y_l)}$.
\WHILE{iter $<$ max\_iter and has not converged}
\STATE Update $\m{A} \leftarrow \argmin^\gamma_{\m{A}' \in \c{A}(T_x,T_y)} \innerprod{\m{L} \ox \m{A}}{\m{A}'}_{\f{F}}$ using \eqref{eqn:dynamic-programming} if $\gamma=0$ or \eqref{eqn:argmin_sdtw} if $\gamma>0$.
\ENDWHILE 
\RETURN $\m{A}$
\end{algorithmic}
\end{algorithm}

\subsection{A Frank--Wolfe-inspired Algorithm}

We now present a straightforward and efficient algorithm for computing GDTW.
Following ideas proposed in the optimal transport setting for computing the Gromov--Wasserstein distance, one can introduce a 4-dimensional array $L_{ijkl}=\mathcal{L}\del[1]{d_{\c{X}}(x_i,x_k),d_{\c{Y}}(y_j,y_l)}$ and express GDTW as
\[
\label{eqn:gdtw_tensor}
\f{GDTW}(\v{x},\v{y}) &= \min_{\m{A} \in \c{A}(T_x,T_y)} \mathcal{G}_{\v{x},\v{y}}(\m{A}),
\\
\mathcal{G}_{\v{x},\v{y}}(\m{A}) &= \innerprod{\m{L} \ox \m{A}}{\m{A}}_{\f{F}}
.
\]
This expression is similar to the DTW objective in~\eqref{eqn:dtw}, but with a cost function $\m{D}$ that now depends on the alignment matrix $\m{A}$. 

The Frank--Wolfe (FW) method is an algorithm for solving constrained optimization problems without requiring projections onto the constraint set.
While FW optimization on convex domains has been deeply studied for both convex \cite{doi:10.1002/nav.3800030109,jaggi13} and non-convex  \cite{DBLP:journals/corr/Lacoste-Julien16} objectives,  FW on non-convex domains is largely unexplored.
Inspired by the non-convex Frank--Wolfe algorithm introduced in \textcite{nonconvexfw}, we propose a variant that enforces feasibility of proposals by setting the step size to $1$. Our algorithm consists of the following steps. First, we (i) solve a linear minimization oracle 
\[
\m{S}^{(t)}&=\argmin_{\m{A} \in \c{A}(T_x,T_y)} \innerprod{\nabla_{\m{A}} \c{G}_{\v{x},\v{y}}(\m{A}^{(t)})}{\m{A}}\\&=\argmin_{\m{A} \in \c{A}(T_x,T_y)} \innerprod{\m{L} \ox \m{A}^{(t)}}{\m{A}},
\]
which can be performed exactly in $O(T_xT_y)$ by a DTW iteration, noting that $\m{L}\otimes \m{A}^{(t)}$ can be computed in $O(T_x^2T_y+T_xT_y^2)$ time in the case $\mathcal{L}=L_2$ \cite{pmlr-v48-peyre16}. 
Then, we (ii) updates the iterates. For the step size $\eta^{(t)} = 1$, the update is
\[
\m{A}^{(t+1)}=\m{A}^{(t)}+\eta^{(t)} (\m{S}^{(t)}-\m{A}^{(t)}))= \m{S}^{(t)}.
\]
Keeping step sizes $\eta^{(t)}$ in $\{0,1\}$ remediates the non-convexity of the constraint set, as iterates are guaranteed to remain in $\c{A}(T_x,T_y)$ in spite of non-convexity.

\begin{figure}
    \centering
    \includegraphics[scale=0.65]{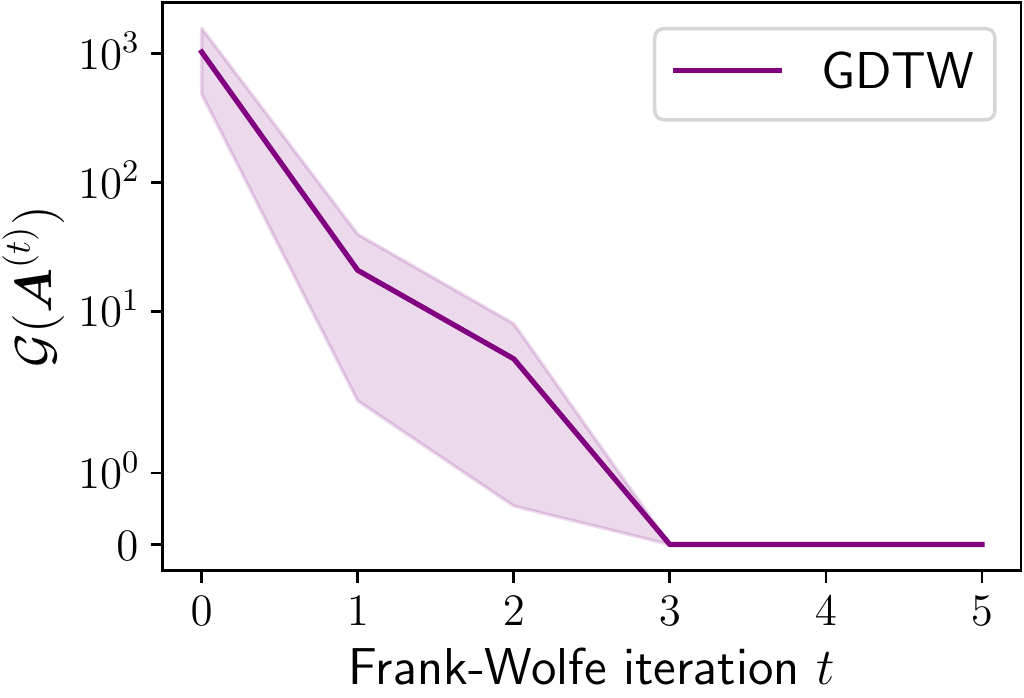}
    \caption{Evolution of the Gromov DTW objective with respect to iteration number for the time series of Figure \ref{fig:alignment_exp}. We plot mean and standard deviation across 10 runs with randomly initialized alignment matrices.}
    \label{fig:loss_values}
\end{figure}

In Figure \ref{fig:loss_values}, we plot the objective $\c{G}_{\v{x},\v{y}}(\m{A}^{(k)})$ at each iteration $k$ across various initializations of alignment matrices, for the time series illustrated in the top row of Figure \ref{fig:alignment_exp}. We observe that in this example, the algorithm recovers the optimal alignment with loss value $0$ in a handful of iterations and is robust with respect to to initialization.

Due to the discrete nature of alignment matrices in the GDTW objective, providing convergence guarantees is non-trivial.
We thus focus on empirical evaluation in Section \ref{sec:experiments} across various settings (such as barycentric averaging, generative modeling, and imitation learning) to demonstrate that the method works well in practice, and defer convergence analysis to future work. 
In practice, we terminate Algorithm \ref{alg:gdtw} if it converges, potentially to a limit cycle, or if the number of iterations reaches a fixed threshold.
A number of alternative algorithms are possible and could be developed, for instance through solving the inner minimization oracle on the convex hull of $\c{A}$ and projecting the results onto the constraint set---we defer these to future work.

\begin{figure}
    \centering
    \includegraphics[scale=0.65]{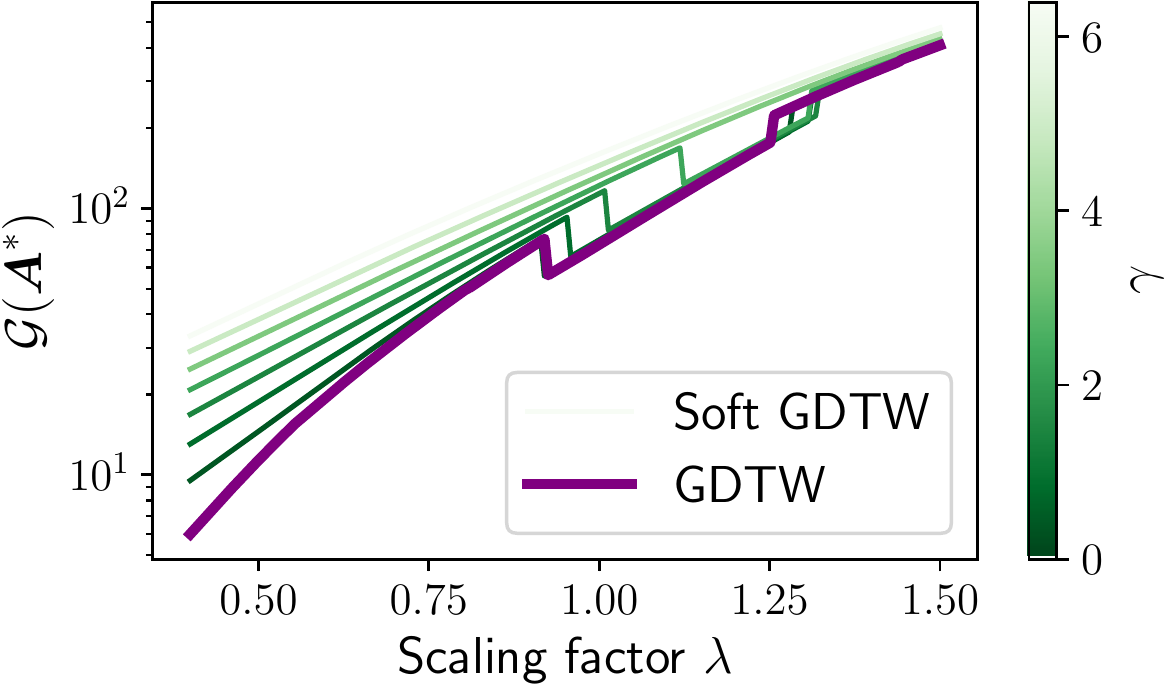}
    \caption{Evolution of (soft) $\f{GDTW}(\v{x},\v{x}_\lambda)$, where $\v{x}_\lambda$ is obtained by distorting the first $T/2$ points of $\v{x}$ by $\lambda$. As $\gamma\to\infty$, soft GDTW becomes smoother and the jumps disappear. As $\gamma\to0$, it converges to GDTW.  }
    \label{fig:differentiability}
\end{figure}

\subsection{Gromov DTW as a Loss Function}

\begin{figure*}
    \centering
    \subfigure[Rotation]{
  \includegraphics[scale=0.95]{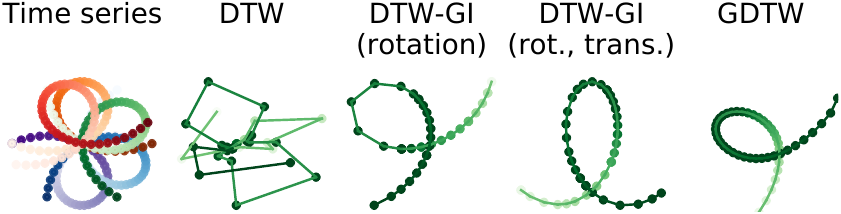}
  \label{fig:bary_curl1}
  }\hspace{0.3cm}
      \subfigure[Rotation and translation]{
   \includegraphics[scale=0.95]{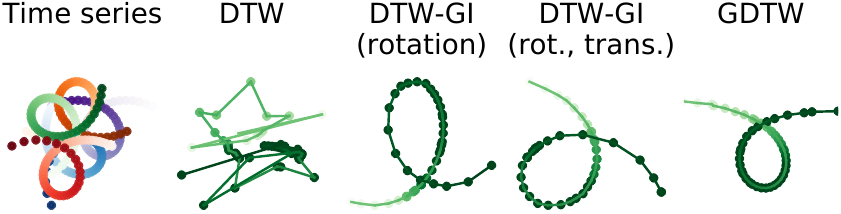}
  \label{fig:bary_curl2}
  }
    \caption{Barycenters of times series with DTW, DTW-GI, and GDTW. In \subref{fig:bary_curl1} random rotations are applied to the time series, while in \subref{fig:bary_curl2} random rotations and translations are applied. DTW fails in both settings, DTW-GI (rotation) fails in the translational setting, while DTW-GI (rotation, translation) and GDTW average sensibly in both as they are invariant to both rotations and translations.
 }
    \label{fig:bary_curl}
\end{figure*}

Gromov DTW can be itself used as a differentiable loss function.
Here, we apply the envelope theorem
\cite{RePEc:mtp:titles:0262531925,milgrom2002envelope} to
\eqref{eqn:gdtw_tensor} and obtain
\[
\label{gdtw_deriv}
\!\nabla_{\v{x},\v{y}} \f{GDTW}(\v{x},\v{y}) &=  \nabla_{\v{x},\v{y}} \innerprod{\m{L}(\v{x},\v{y})\otimes \m{A}^*}{\m{A}^*}_{\f{F}},
\\
\label{gdtw_deriv_2}
\m{A}^* &= \argmin_{\c{A}(T_x,T_y)}\c{G}_{\v{x},\v{y}}(\m{A}).
\]
Similarly to DTW, GDTW suffers from unpredictability when the time series is close to a change point of the optimal alignment matrix because of the discontinuity of derivatives.
To remediate this, we describe how GDTW can be softened analogously to soft DTW, to obtain smoother derivatives. A smoother landscape also helps robustify GDTW with respect to alignment initialization.
The algorithm for computing Gromov DTW consists of successive DTW iterations.
Following ideas from the Gromov--Wasserstein literature, we replace the DTW operation in the iterations with a softened version, by replacing the argmin by the soft argmin in \eqref{eq:softargmin}.
A priori, it may seem that computing this is significantly more involved. 
However, \textcite{pmlr-v70-cuturi17a} observe that
\[
\label{eqn:argmin_sdtw}
\operatorname*{{\argmin}^\gamma}_{\m{A}\in \c{A}(T_x,T_y)} \innerprod{\m{D}}{\m{A}}_{\f{F}} =\grad_{\m{D}} \f{DTW}_\gamma(\m{D}),
\]
where $\argmin^\gamma$ is the softened $\argmin$ defined in \eqref{eq:softargmin}. Hence, \eqref{eq:softargmin} can be computed by reverse-mode automatic differentiation in quadratic time, and soft GDTW iterations can be performed by plugging in $\m{D}=\m{L}\ox\m{A}$.
We approximate the derivatives of soft GDTW by using the optimal soft alignment matrix and applying \eqref{gdtw_deriv} and \eqref{gdtw_deriv_2}: by the envelope theorem, this approximation becomes exact in the small-$\gamma$ limit.

In Figure \ref{fig:differentiability}, we plot the evolution of GDTW and soft GDTW as one of the 2D time series gets distorted by a factor $\lambda$: $\v{x}_\lambda = \v{x}+(0, \lambda)$. Across a range of $\lambda$ values GDTW's optimal alignment matrices vary in discrete steps, which leads to discontinuous values, and hence discontinuous gradients, around such $\v{x}_\lambda$ values. By contrast, soft GDTW with sufficiently high $\gamma$ values is qualitatively smooth with respect to $\lambda$, which remediates discontinuity of GDTW's gradients.

\section{Learning with Gromov DTW as a Loss Function}
\label{sec:learning}
We now present a range of applications of Gromov DTW, including barycentric averaging, generative modeling and imitation learning.

\subsection{Barycenters}\label{subsec:bary}

To compute barycenters of Gromov DTW \eqref{eqn:gdtw_tensor}, we extend the algorithm from \textcite{pmlr-v48-peyre16} to the sequential setting.
Given time series $\v{x}_1,...,\v{x}_J \in \c{X}_1^{T_1},...,\c{X}_J^{T_J}$ and weights $\v{\alpha}\in \Delta_J$, let $(\m{D}_{\v{x}_j})_{mn}=d_{\c{X}_j}(\v{x}^{(m)}_j,\v{x}^{(n)}_j)$.
For fixed $T \in \N$ (length of the barycentric time series), the barycenter is defined as any triple $(\c{X},d_{\c{X}},\v{x})$ satisfying
\[
\label{eq:bary}
\m{D}^* &= \argmin_{\m{D}\in \R^{T\x T}} \sum_{j=1}^{J}\alpha_j \f{GDTW}(\m{D},\m{D}_{\v{x}_j}),
\\
\quad \m{D}_{mn} &= d_{\c{X}}(\v{x}^{(m)},\v{x}^{(n)}),\,\, n,m=1,\dots,T,
\]
where, to ease notation, we denote $\f{GDTW}$ purely in terms of distance matrices.
The barycentric time series can then be reconstructed by applying multi-dimensional scaling (MDS)~\cite{Kruskal:1978eu} to $\m{D}^*$: see Figure \ref{fig:bary_curl} for an illustration.
We rewrite~\eqref{eq:bary}~as
\[
\min_{\substack{\m{D} \in \mathbb{R}^{T\times T}\\\m{A}_1,..,\m{A}_J\in \c{A}(T_x,T_y)}} \sum_{j=1}^{J}\alpha_j \innerprod{\mathcal{L}(\m{D},\m{D}_{\v{x}_j}) \ox \m{A}_j}{\m{A}_j}_{\f{F}}
\label{eq:bary_alter}
\]
and solve it by alternating between minimizing over $\m{A}_j$ for $j \in 1,...,J$ via Algorithm~\ref{alg:gdtw}, and minimizing over $\m{D}$ for fixed  $\m{A}_j$. The latter step admits a closed-form solution given as follows.

\begin{proposition}
\label{prop:barycenters} 
If $\c{L}$ is squared error loss, the solution to the minimization in~\eqref{eq:bary_alter} for fixed $\m{A}_j$ is
\[
\m{D}=\sum_{j=1}^J \frac{\alpha_j \m{A}_j^T\m{D}_{\v{x}_j}\m{A}_j}{  \sum_{j=1}^J \alpha_j(\m{A}_j \v{1})(\m{A}_j \v{1})^T},
\]
where division is performed element-wise, and $\v{1}$ is a vector of ones.
\end{proposition}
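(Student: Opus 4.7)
My plan is to show that once the alignment matrices $\m{A}_j$ are frozen, the objective in~\eqref{eq:bary_alter} becomes a strictly convex quadratic form in the unknown entries of $\m{D}$, so the first-order optimality conditions yield a closed-form solution by taking a single gradient and setting it to zero.

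Concretely, I would first substitute $\c L(a,b)=(a-b)^2$ and unfold the tensor inner product to write the per-term contribution as
\[
\innerprod{\c L(\m D,\m D_{\v x_j})\ox \m A_j}{\m A_j}_{\f F}
=\sum_{mnkl}\bigl(D_{mk}-(D_{\v x_j})_{nl}\bigr)^2 (A_j)_{mn}(A_j)_{kl}.
\]
Because $\m A_j$ does not depend on $\m D$, each summand is a univariate quadratic in one scalar entry of $\m D$, so the whole objective is a nonnegative quadratic in $\m D$; the Hessian is diagonal with nonnegative entries, so the first-order conditions are both necessary and sufficient for a global minimum.

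Next, I would differentiate with respect to a fixed entry $D_{ab}$. Only the term with $(m,k)=(a,b)$ contributes, giving
\[
2\sum_j\alpha_j\sum_{nl}\bigl(D_{ab}-(D_{\v x_j})_{nl}\bigr)(A_j)_{an}(A_j)_{bl}=0,
\]
which rearranges to
\[
D_{ab}\,\sum_j\alpha_j (A_j\v1)_a(A_j\v1)_b=\sum_j\alpha_j\sum_{nl}(D_{\v x_j})_{nl}(A_j)_{an}(A_j)_{bl}.
\]
The final step is to recognize the two sides as matrix expressions: the factor $(A_j\v1)_a(A_j\v1)_b$ is the $(a,b)$ entry of $(\m A_j\v1)(\m A_j\v1)^T$, and the double sum $\sum_{nl}(D_{\v x_j})_{nl}(A_j)_{an}(A_j)_{bl}$ collapses to the $(a,b)$ entry of the appropriate triple product of $\m A_j$, $\m D_{\v x_j}$ and $\m A_j^T$ (with the transpose placement dictated by whether $\m A_j$ is treated as $T\times T_j$ or $T_j\times T$ in the convention of~\eqref{eq:bary_alter}). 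Dividing entrywise gives the stated formula.

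The only real obstacle is the index bookkeeping in the last step, in particular keeping the order of $\m A_j$ and $\m A_j^T$ consistent with the dimensional convention used for the alignment matrices between the barycenter $\m D$ and each $\m D_{\v x_j}$; once this is pinned down, the rest is direct substitution. One should also note in passing that the elementwise division is well-defined whenever every row and column of $\m D$ is covered by at least one alignment, which holds by definition of $\c A(T_x,T_y)$ (each alignment matrix has at least one $1$ in each row and column of the relevant dimension), so $\sum_j\alpha_j(\m A_j\v1)(\m A_j\v1)^T$ is strictly positive entrywise.
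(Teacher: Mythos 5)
Your proof is correct and takes essentially the same route as the paper's: substitute the squared loss, observe the objective is quadratic in $\m{D}$ with the entries decoupled, set the gradient to zero, and divide elementwise (the paper carries out the identical computation in matrix form with the elementwise product $\odot$, arriving at $\m{D}\odot\sum_j\alpha_j(\m{A}_j\v{1})(\m{A}_j\v{1})^T=\sum_j\alpha_j\m{A}_j^T\m{D}_{\v{x}_j}\m{A}_j$). Your entrywise bookkeeping is in fact slightly more careful than the paper's: it makes explicit that the transpose placement in the numerator must match the dimensional convention for $\m{A}_j$ used in the denominator (with $\m{A}_j\in\{0,1\}^{T\times T_j}$ one gets $\m{A}_j\m{D}_{\v{x}_j}\m{A}_j^T$), and your observation that the denominator is entrywise positive because every row and column of an alignment matrix contains a one is a well-definedness check the paper leaves implicit.
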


\begin{proof}
Appendix \ref{sec:app_theory}.
\end{proof}

\begin{figure*}
\centering
\begin{tabular}{@{} p{1.6cm} @{} p{1.5cm} @{} p{1.7cm} @{} p{1.5cm} @{} p{1.5cm} @{} p{4ex} @{} p{2cm} @{} p{1.5cm} @{} p{1.7cm} @{} p{1.5cm} @{} p{1.5cm} @{}}
\centering\sffamily\footnotesize & \centering\sffamily\footnotesize DTW & \centering\sffamily\footnotesize DTW-GI (rotation) & \centering\sffamily\footnotesize DTW-GI \newline\clap{(rot., trans.)} & \centering\sffamily\footnotesize GDTW & \centering\sffamily\footnotesize & \centering\sffamily\footnotesize & \centering\sffamily\footnotesize DTW & \centering\sffamily\footnotesize DTW-GI (rotation) & \centering\sffamily\footnotesize DTW-GI \newline\clap{(rot., trans.)} & \centering\sffamily\footnotesize GDTW
\end{tabular}
\\
  \includegraphics[height=1.5cm]{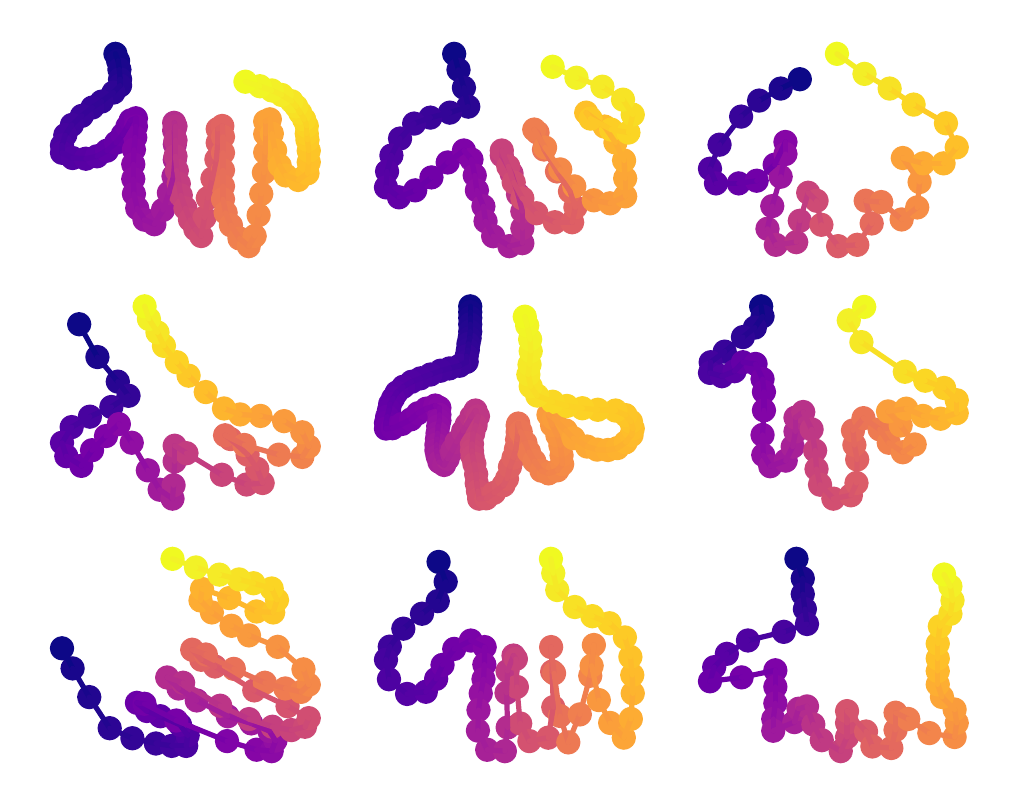}
  \hspace*{-1.75ex}
  \includegraphics[height=1.5cm]{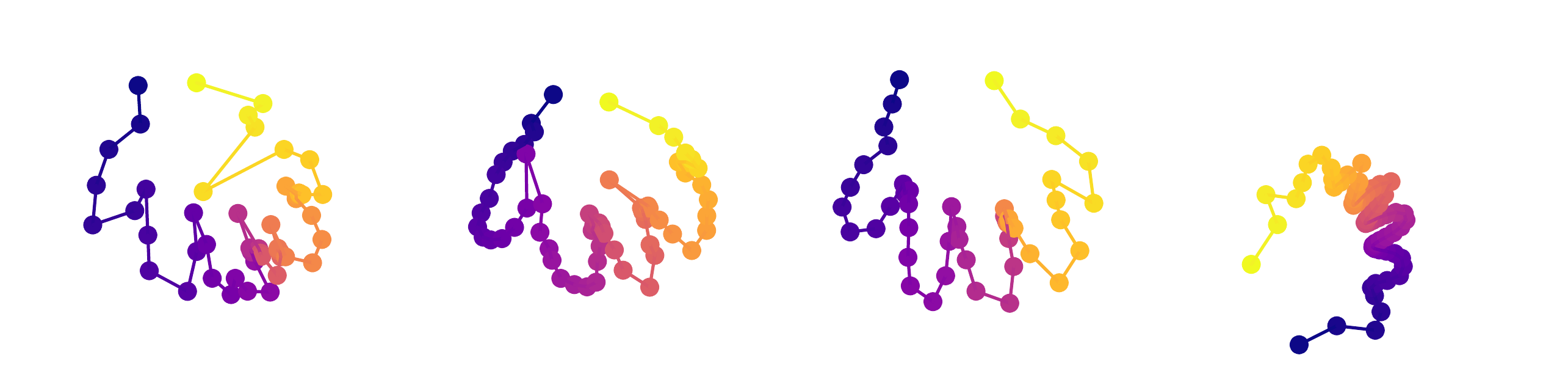}
  \hspace*{1.75ex}
  \includegraphics[height=1.5cm]{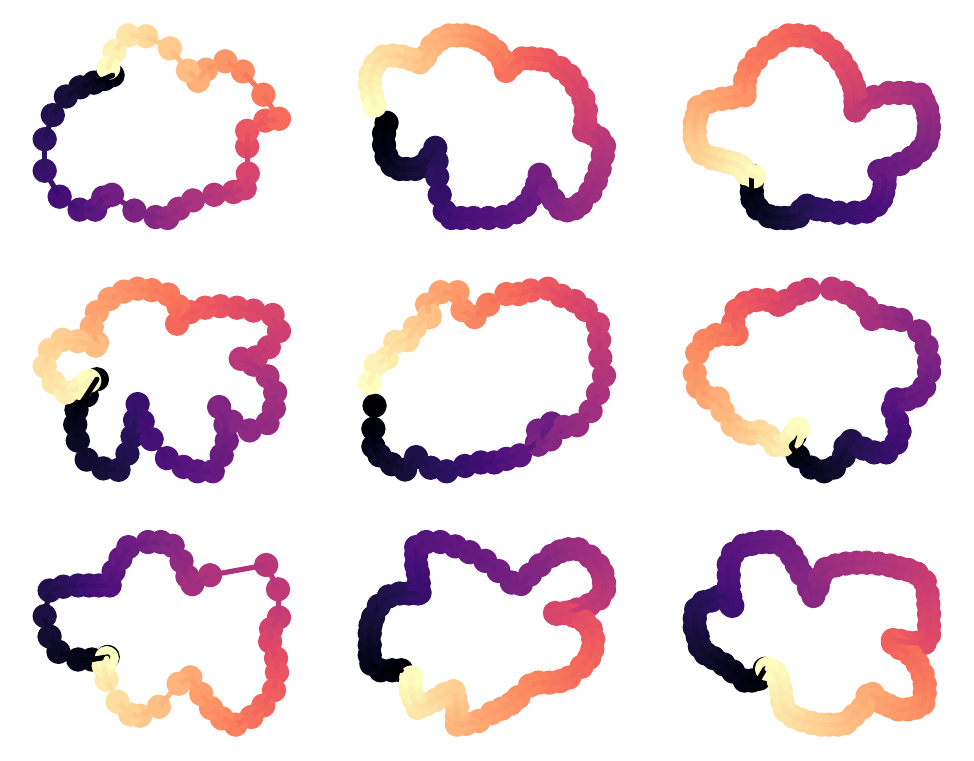}
  \hspace*{-1.75ex}
  \includegraphics[height=1.5cm]{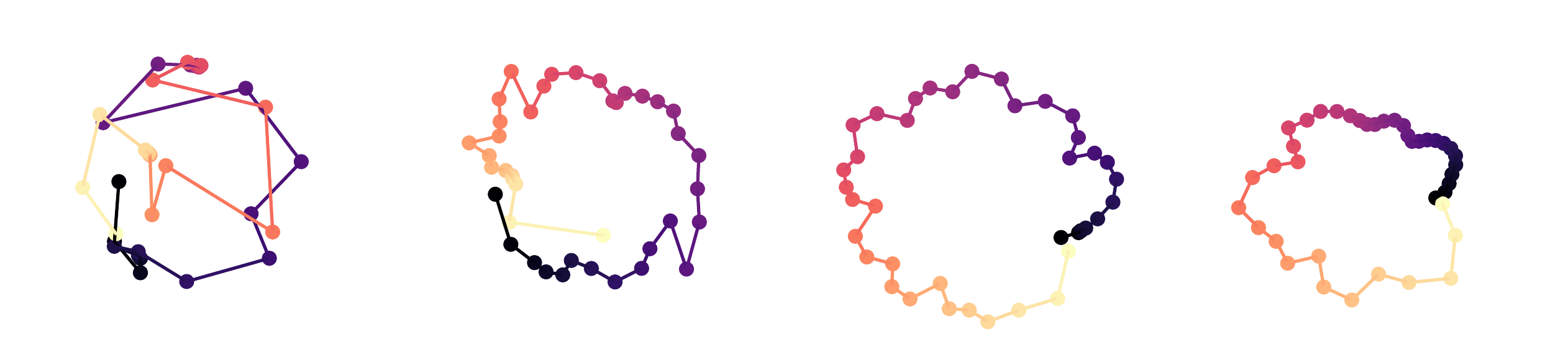}
  \\[2ex]
  \hspace*{-2ex}
  \includegraphics[height=1.5cm]{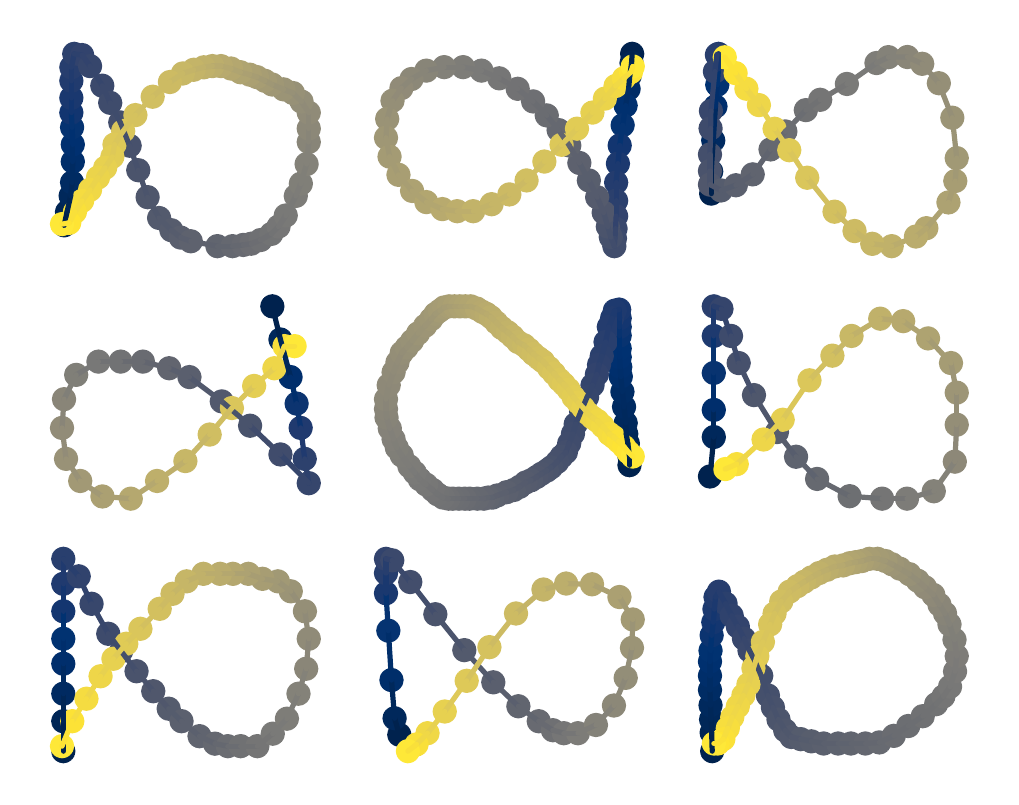}
  \hspace*{-1.75ex}
  \includegraphics[height=1.5cm]{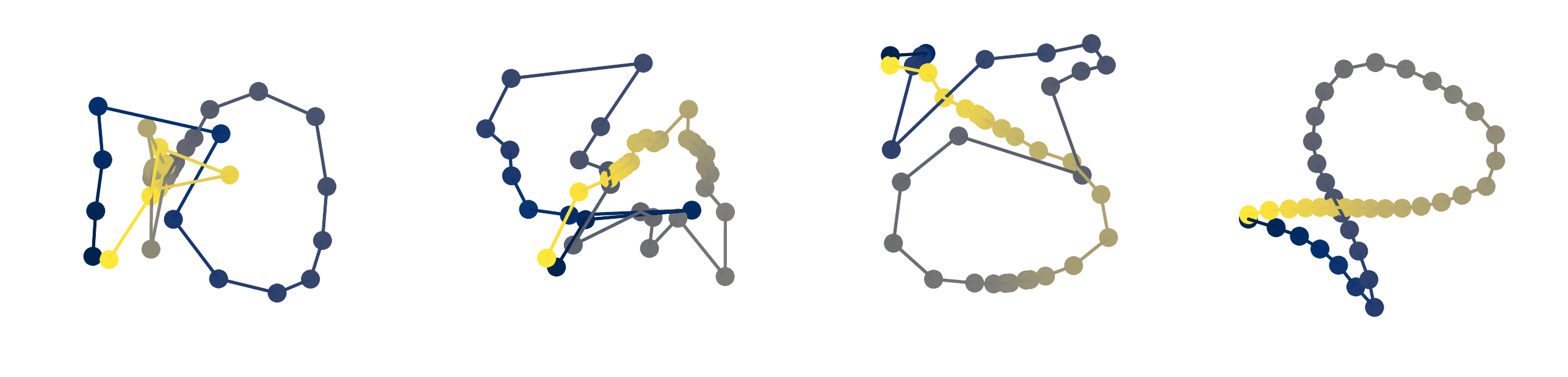}
  \hspace*{1.75ex}
  \includegraphics[height=1.5cm]{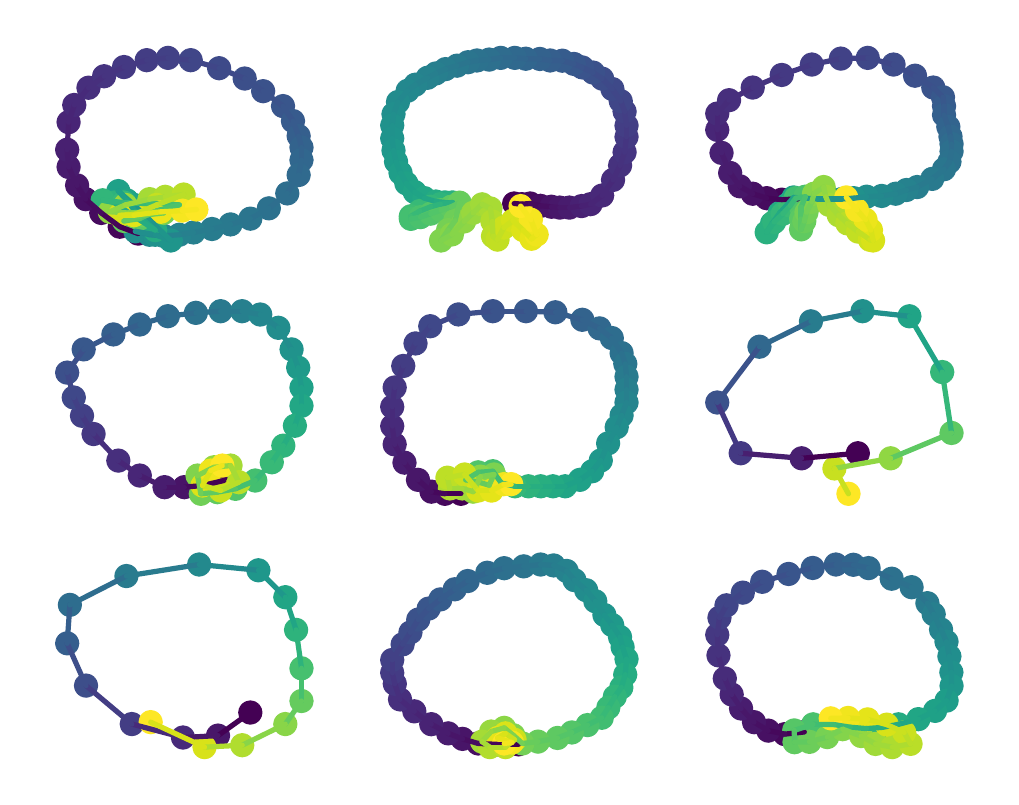}
  \hspace*{-1.75ex}
  \includegraphics[height=1.5cm]{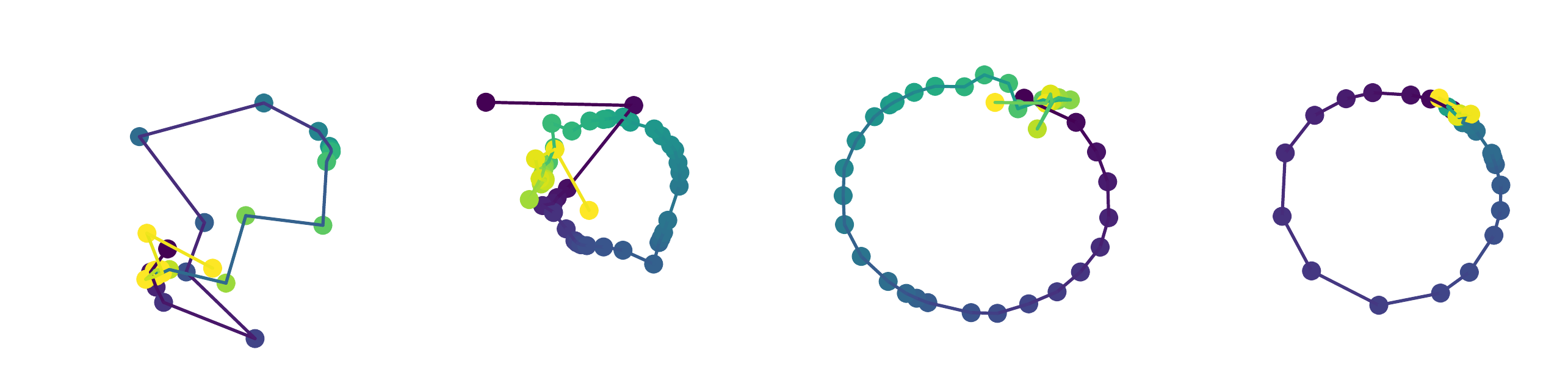}
\caption{Barycenters computed on the QuickDraw dataset using DTW, DTW-GI and GDTW, and sample data points from four different classes (\emph{hands}, \emph{clouds}, \emph{fishes}, \emph{blueberries}). We observe that only GDTW barycenters are meaningful across all datasets, and hence that GDTW better captures the geometric shape of the time series. }
\label{fig:quickdraw_bary}
\end{figure*}

\begin{figure*}[b!]
    \centering
    \begin{minipage}{0.03\hsize}
    \smash{\rotatebox[origin=c]{90}{
      \begin{tabular}{@{} p{1.625cm}@{} p{1.625cm} @{}}
        \centering\sffamily\footnotesize GDTW & \centering\sffamily\footnotesize DTW
      \end{tabular}
      }}
    \end{minipage}
    \begin{minipage}{0.87\hsize}
      \includegraphics[width=\hsize]{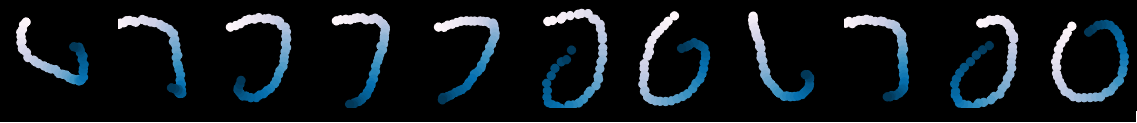}
      \\[-0.5ex]
      \includegraphics[width=\hsize]{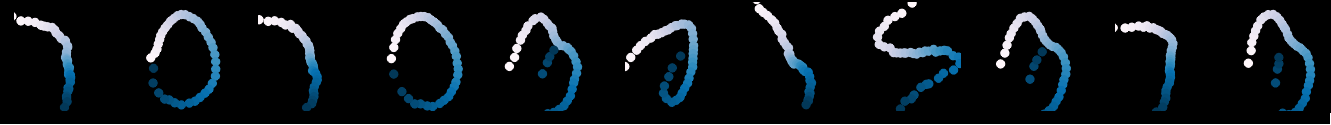}
    \end{minipage}
    \caption{Samples generated by the time series GAN trained on Sequential MNIST, with $\f{DTW}_\gamma$  and $\f{GDTW}_\gamma$, respectively, used as ground costs.}
    \label{fig:gan}
\end{figure*}

\subsection{Generative Modeling}\label{sec:genmod}
We now use GDTW as an approach for training generative models of time series.
Here, we view our dataset of time series $\v{x}^1,...,\v{x}^J \in \c{X}_1^{T_1},...,\c{X}_J^{T_J}$ as a discrete measure $\mu=\frac{1}{J}\sum_{j=1}^J\delta_{\v{x}^j}$.
We define a generative model $\mu_\theta=G_{\theta\#}\nu$, where $\nu$ is a latent measure, such as an isotropic Gaussian, $G_\theta:\c{Z} \-> \c{X}^T$ is a neural network and $G_{\theta\#}\nu$ is the pushforward measure.
By nature of Gromov DTW, the generated time series do not have to live in the same space as the data. 
In particular, this allows us to specify the length of the time series we wish to generate.
We train the model $\mu_\theta$ by minimizing the entropic Wasserstein distance $\f{W}_\eps$ \cite{NIPS2013_4927} between $\mu$ and $\mu_\theta$. 
For the ground cost $d$ of $\f{W}_\eps$, we use $\f{DTW}_\gamma$ and $\f{GDTW}_\gamma$.
For $\f{GDTW}_\gamma$, the objective~is
\[
\label{eq:gdtwgan}
\begin{aligned}
& \min_{\theta \in \Theta} \f{W}_\eps (\mu,\mu_\theta)   
\\
&=\min_{\pi \in \Pi(\mu,\mu_\theta)} \E_{(\v{x},\v{y}) \~ \pi}\f{GDTW}_\gamma(\v{x},\v{y})-\eps H(\pi),
\end{aligned}
\]
where $H$ is the  entropic regularization term.
Following \textcite{pmlr-v84-genevay18a}, it is also possible to use the debiased analog of \eqref{eq:gdtwgan}.
$\f{W}_\eps(\mu,\mu_\theta)$ is computed efficiently using the Sinkhorn algorithm \cite{Sinkhorn1974DiagonalET,NIPS2013_4927}, and $\theta$ is minimized by gradient descent.
This approach extends the Sinkhorn GAN by \textcite{pmlr-v84-genevay18a} and the GWGAN by \textcite{pmlr-v97-bunne19a} to sequential data.

\subsection{Imitation Learning}
\label{sec:imit}
We consider an imitation learning setting in which an agent needs to solve a task given the demonstration of an expert. 
We assume the agent has access to the true transition function $\mathcal{T}$ over the agent's state-space $\c{X}$, and define a state trajectory as a time series $\v{x} \in \mathcal{X}^{T_x}$. 
An expert state trajectory $\v{y}_{\text{exp}} \in \mathcal{Y}^{T_y}$ solving a specific task, such as traversing a maze, is given. 
The goal is to train the agent's parametrized policy $\pi_{\theta}: \c{X} \-> \c{A}$ to solve the given task by imitating the expert's behavior, where $\c{A}$ is the action space. 
To find this policy, the agent uses the model of the environment to predict state trajectories $\v{x}_\theta$ under the current policy $\pi_\theta$, compares these predictions with the expert's trajectory $\v{y}_{\text{exp}}$, and then optimizes the controller parameters $\theta$ to minimize the distance between predicted agent trajectory and observed expert trajectory. 
Using GDTW, our objective is
\[
\label{eq:il_objective}
\min_{\theta} \f{GDTW}_\gamma(\v{y}_{\text{exp}},\v{x}_{\theta})
.
\]
The flexibility of GDTW allows for expert trajectories defined in pixel space $\mathcal{Y}=\mathbb{R}^{32\times 32}$, while the agent lives in $\mathcal{X}=\R^2$. 
Rollouts obtained with $\pi_\theta$ mimic the expert's trajectory up to isometry.
For comparison, instead of \eqref{eq:il_objective}, we also consider DTW. The aim is to learn the same trajectory in the same space as the expert. DTW, in contrast with GDTW, requires $\c{X}=\c{Y}$, and the starting positions for the agent and expert to be close.
From a reinforcement learning perspective, the use of GDTW in \eqref{eq:il_objective} can be interpreted as a value estimate
and gradient-based policy learning as taking estimated value gradients \cite{fairbank2012value,heess2015learning}.
\begin{figure*}[t!]
\centering
\subfigure[$T=1$]{
  \includegraphics[width=0.142\hsize]{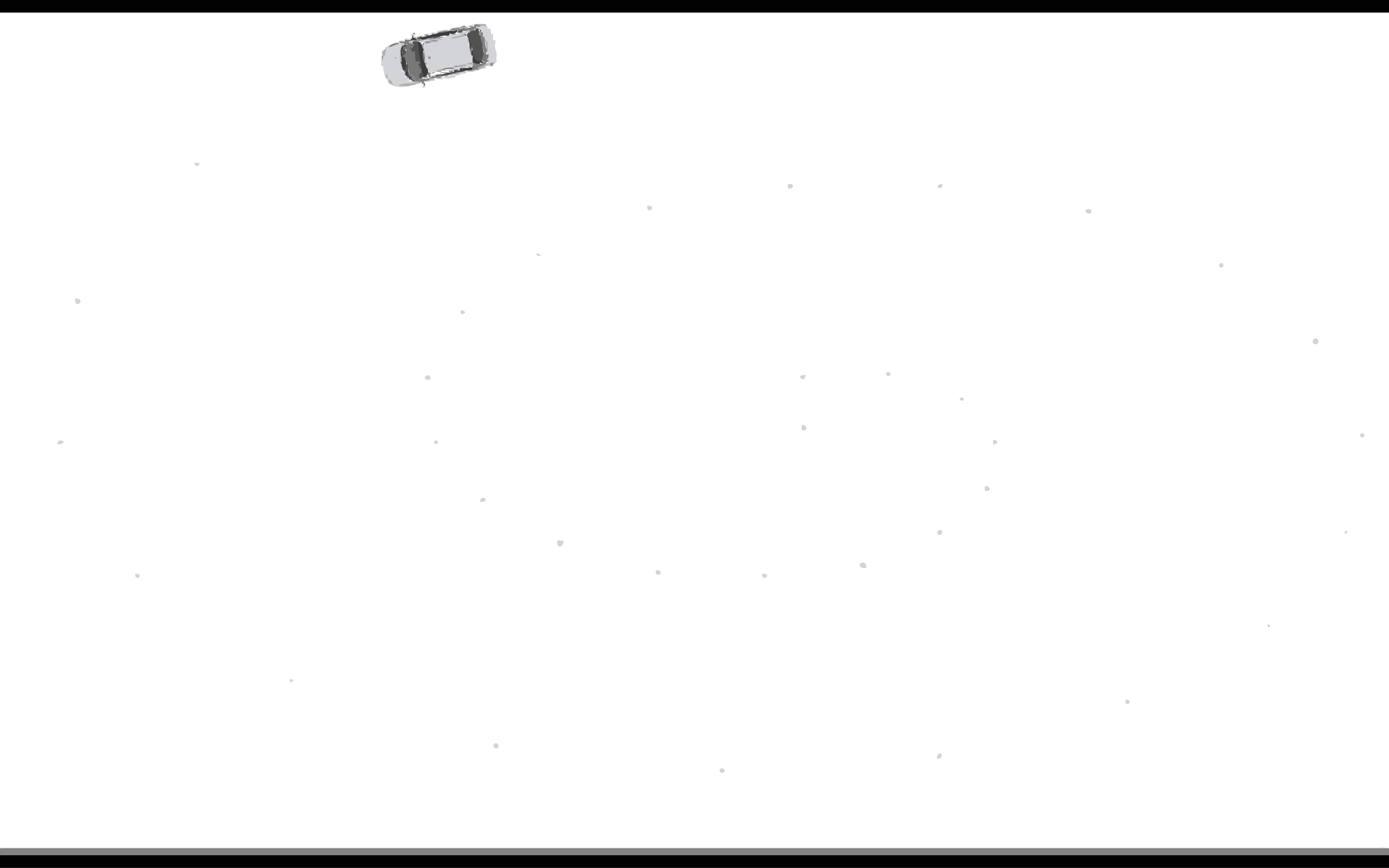}
  \label{fig:1}
  }
  \subfigure[$T=7$]{
  \includegraphics[width=0.142\hsize]{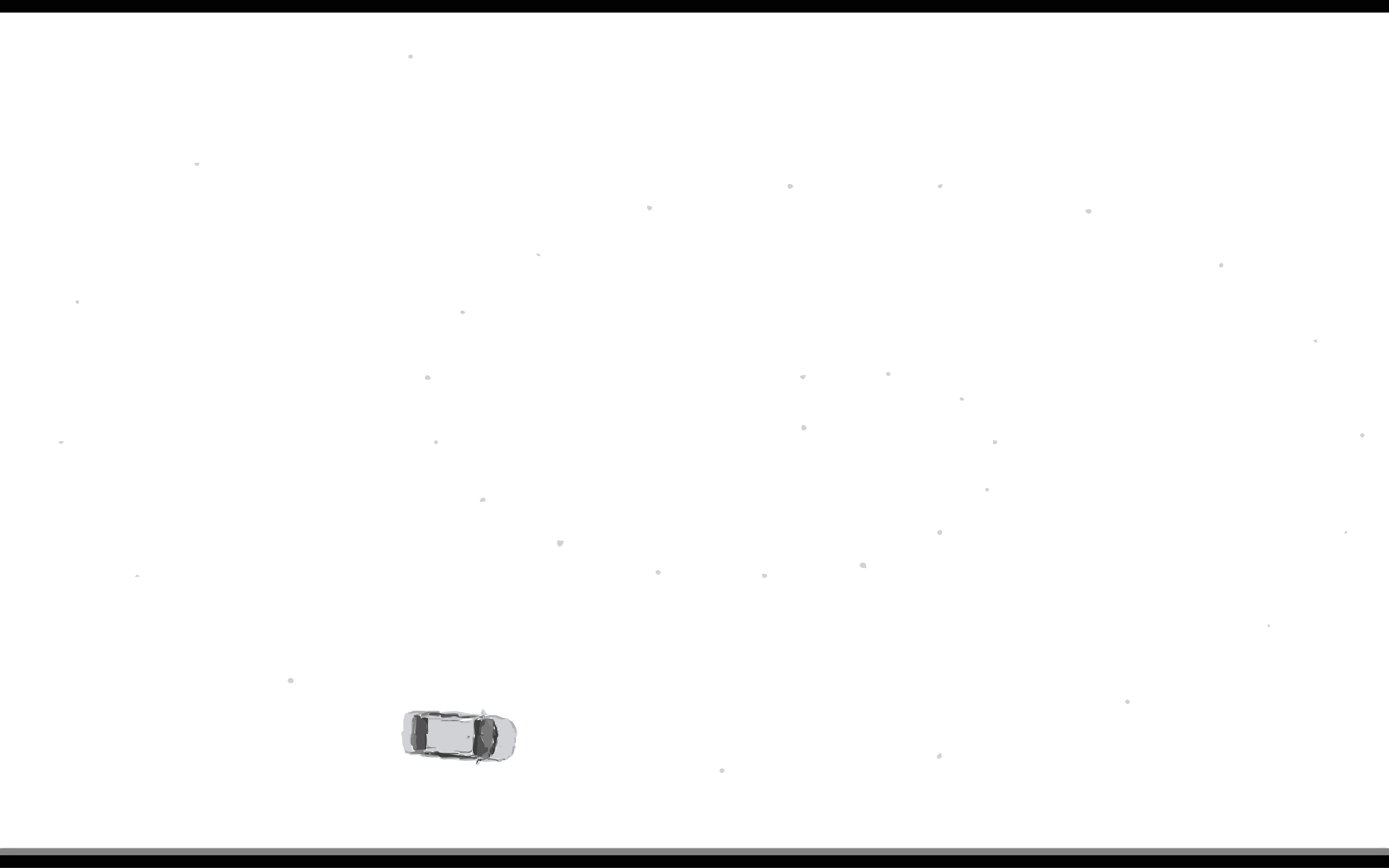}
  \label{fig:7}
  }
  \subfigure[$T=15$]{
  \includegraphics[width=0.142\hsize]{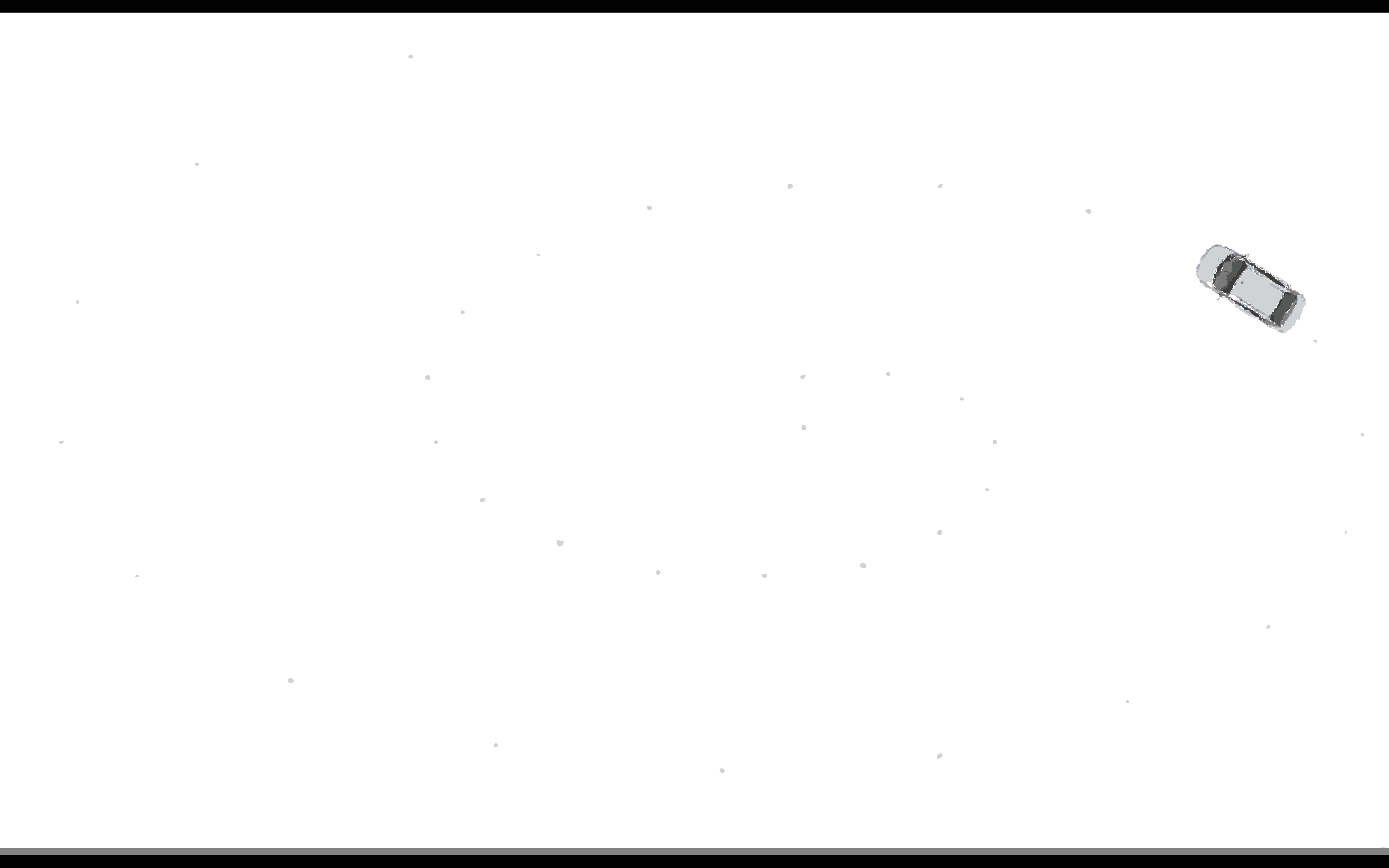}
  \label{fig:15}
  }
  \subfigure[$T=22$]{
  \includegraphics[width=0.142\hsize]{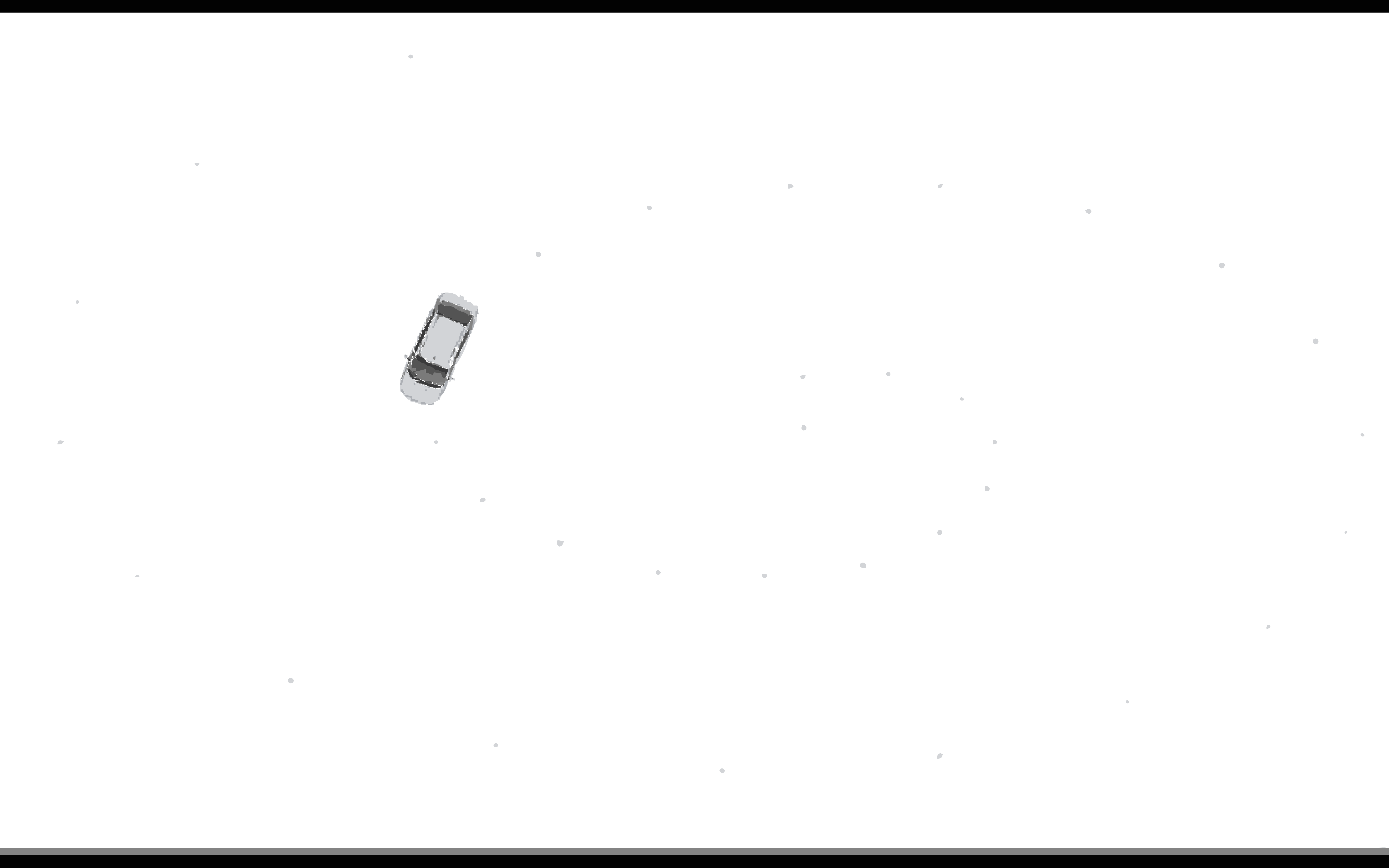}
  \label{fig:22}
  }
  \subfigure[$T=30$]{
  \includegraphics[width=0.142\hsize]{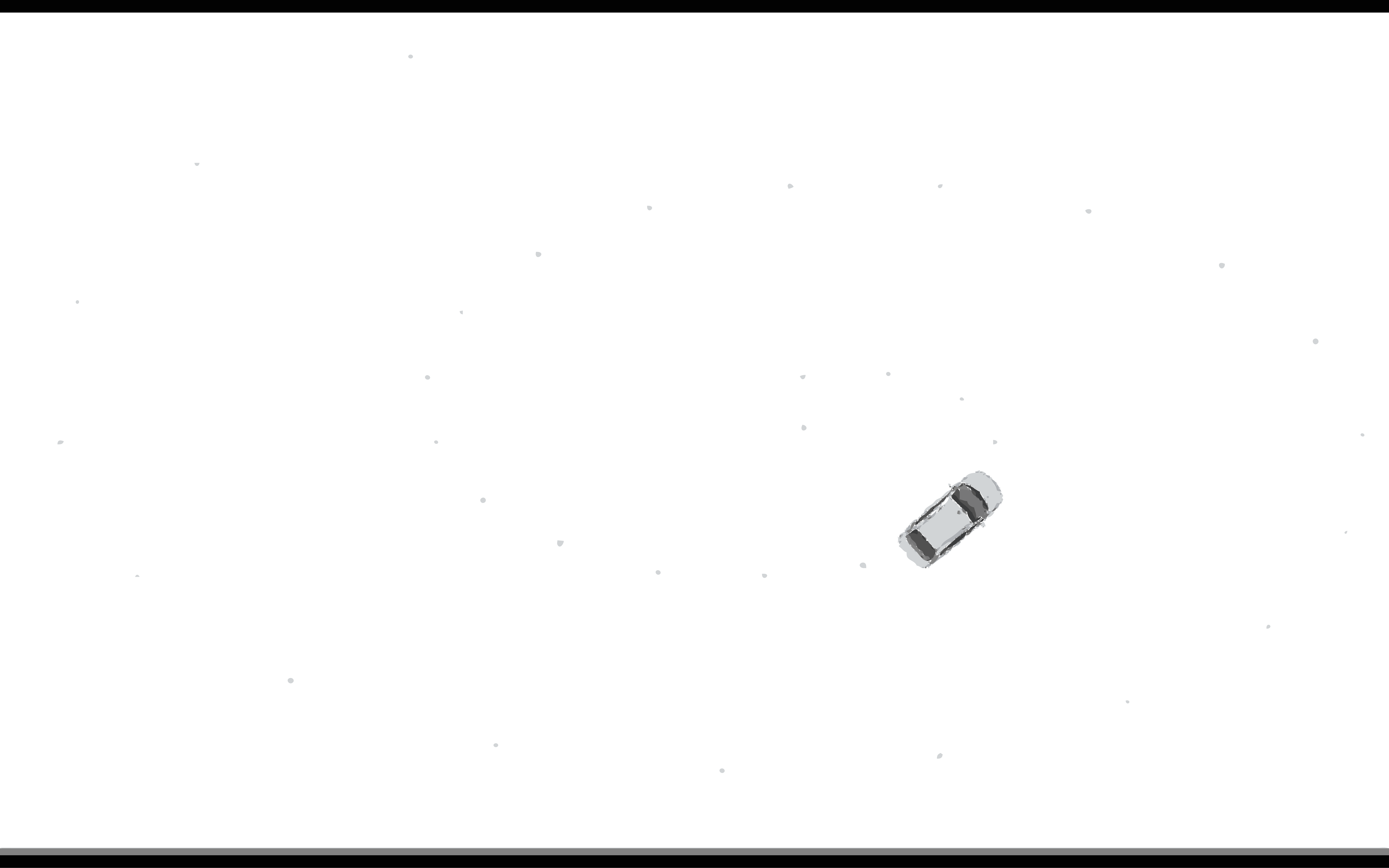}
  \label{fig:30}
  }
  \subfigure[$T=36$]{
  \includegraphics[width=0.142\hsize]{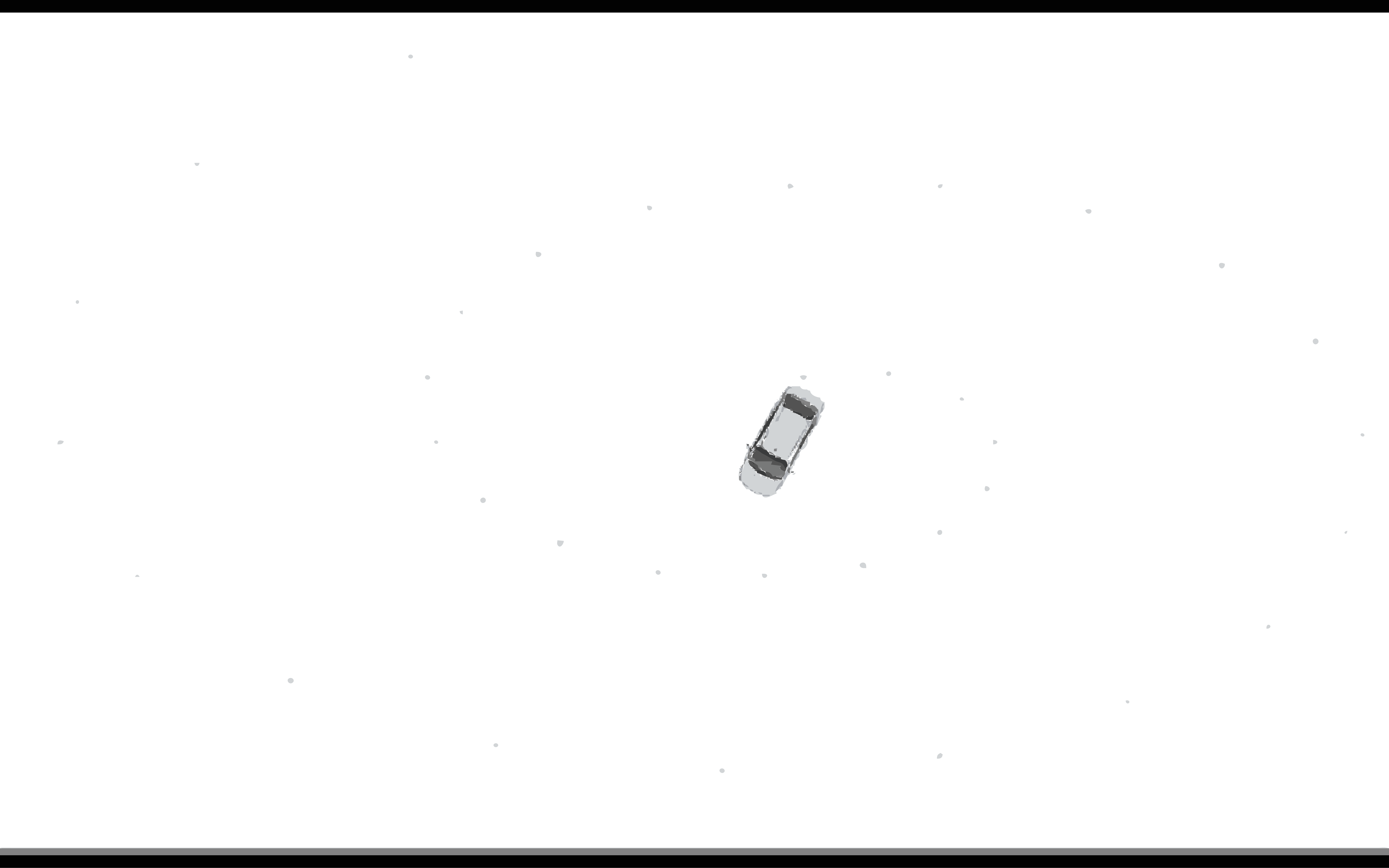}
  \label{fig:36}
  }
\subfigure[Rollout of the learned policy]{
  \includegraphics[scale=0.68]{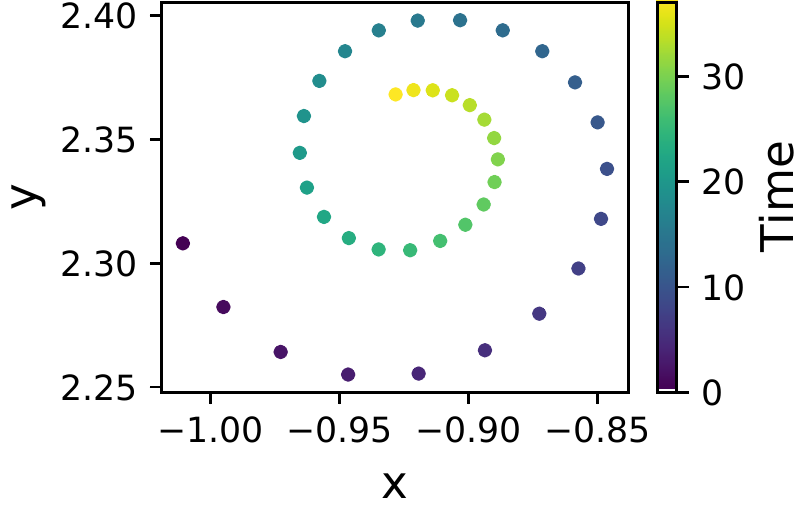}
  \label{fig:policy}
  }
  \subfigure[Loss: video trajectory]{
  \includegraphics[scale=0.68]{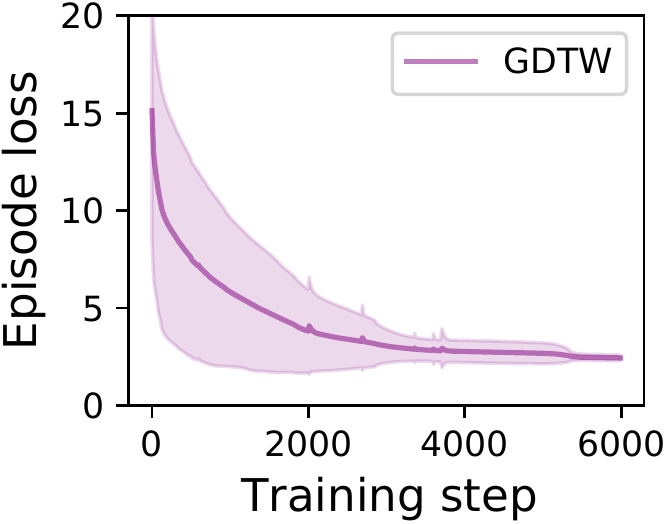}
  \label{fig:rewardincomp}
  }
  \subfigure[Loss: 2D expert trajectory]{
  \includegraphics[scale=0.68]{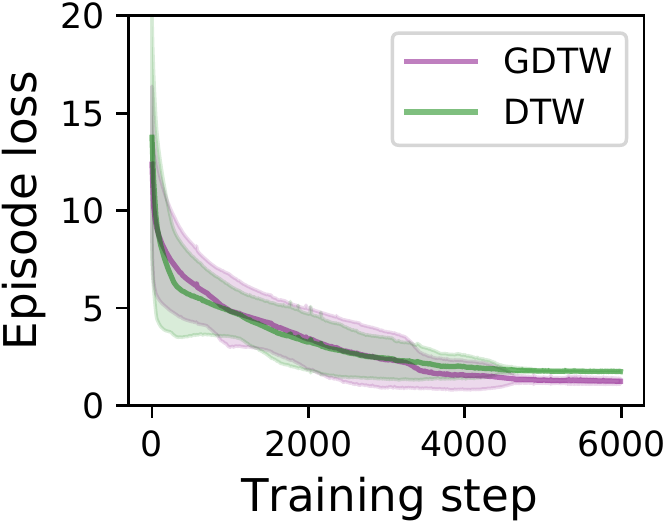}
  \label{fig:rewardcomp}
  }
\caption{\subref{fig:1}--\subref{fig:36}: Snapshot of an expert trajectory (sequence of pixel images); \subref{fig:policy}: policy of an agent in $\R^2$ learned by imitation learning given video demonstrations; \subref{fig:rewardincomp}: log-episodic loss per training step in the video/2D setting; \subref{fig:rewardcomp} in the 2D/2D setting (averaged across 20 seeds, with standard deviations.}
\label{fig:mn1}
\end{figure*}

\begin{figure*}[b!]
\vspace*{-1ex}
\footnoterule
\footnotesize
\quad \footnotemark[1]QuickDraw can be found at \url{https://quickdraw.withgoogle.com/}.
\\
\strut\quad \footnotemark[2]Sequential MNIST can be found at \url{https://github.com/edwin-de-jong/mnist-digits-stroke-sequence-data}.
\label{ftn:code}
\end{figure*}

\section{Experiments}
\label{sec:experiments}

We assess the effectiveness of our proposals in settings in which (i) time series live in comparable spaces and where previous approaches apply, (ii) the spaces are incomparable.

\textbf{Baselines.} Throughout the experiments, we compare $\f{GDTW}_\gamma$ to, in settings in which they apply, $\f{DTW}_\gamma$ \cite{saoke78,pmlr-v70-cuturi17a} its respectively rotationally-invariant and translationally-rotationally-invariant extensions $\f{DTW-GI}$ (rotation), $\f{DTW-GI}$ (rotation, translation) \cite{vayer2020time}, and canonical time warping \cite{ctw}.

\subsection{Alignment}

We first evaluate GDTW on alignment tasks.
We consider two settings in which $\v{y}$ is obtained by applying to $\v{x}$ (i) a rotation, and (ii) a translation followed by a rotation.
In Figure \ref{fig:alignment_exp}, we see that GDTW recovers the right alignment in both settings, while DTW-GI with rotation only works in the rotational setting---this can be seen in the top row of Figure \ref{fig:alignment_exp}. DTW-GI with rotation and translation and CTW work in both settings, while ordinary DTW fails in both. We emphasize that CTW and DTW-GI variants are made invariant to the symmetries  by explicitly optimizing manually specified spatial projections, whilst GDTW works in both settings without needing anything to be specified, as GDTW is invariant to symmetries by construction.
Further experiments with soft DTW and GDTW are given in Appendix \ref{sec:apdxexp}.

\subsection{Barycenter Computation}

We investigate barycentric averaging of GDTW, on both toy data and the QuickDraw\hyperref[ftn:code]{\footnotemark[1]} dataset. 
We compare Gromov DTW to DTW and DTW-GI variants, where barycenters from the latter two methods are computed using DTW barycentric averaging \cite{PETITJEAN201276}. 

\paragraph{Toy data.}
In Figure~\ref{fig:bary_curl}, we see that in comparable settings DTW barycenters fail if time series are rotated or translated. 
DTW-GI with rotation is robust to rotation, but fails when applying both rotations and translations, because the translational symmetry is not manually specified.
By contrast, GDTW is robust to both, and leads to meaningful barycenters in all of the given settings.

\paragraph{QuickDraw dataset.}

The QuickDraw dataset consists of time series of drawings in $\R^2$, belonging to 345 categories. 
Among those categories, we selected \emph{hands}, \emph{clouds}, \emph{fishes}, and \emph{blueberries}. 
To address high variability in classes,  we selected input data following a preprocessing routine described in Appendix \ref{sec:apdxexp}.
A sample of the data sets, together with barycenters computed with DTW, DTW-GI, and GDTW is displayed in Figure~\ref{fig:quickdraw_bary}. 
DTW and DTW-GI  with rotation fail to reproduce the shape of the inputs for most classes. DTW-GI with rotation and translation outperforms DTW-GI with rotation, but fails on the \emph{fish} class, while GDTW provides meaningful barycenters across the range of examples. 
GDTW is thus more robust in recovering the geometric shape of the time series, whilst DTW variants are sensitive to isometries.

\subsection{Generative Modeling}

We evaluate the generative modeling proposal of Section \ref{sec:genmod}, and analyze the behavior of the learned model when using DTW and GDTW. 
Here, we consider the sequential-MNIST dataset,\hyperref[ftn:code]{\footnotemark[2]} which consists of time series of digits in $\R^2$ being drawn, and where each time step corresponds to a stroke.
In Figure~\ref{fig:gan}, we see that samples using GDTW as ground cost \eqref{eq:gdtwgan} are of a significantly higher quality than samples using DTW.
This can be explained by the variability in the data set: slight translations significantly affect DTW, but not GDTW.
Note that the GDTW samples are rotated and reflected, since GDTW only produces learned samples up to metric isometries.

\subsection{Imitation Learning}

We now apply Gromov DTW to the imitation learning setting of Section \ref{sec:imit}. 
Here, we are given an expert trajectory $\v{y}_{\f{exp}}$, and our goal is to find a policy $\pi_\theta$, such that the agent's simulated trajectory $\v{x}_\theta$ mimics $\v{y}_{\f{exp}}$.
We consider maze navigation tasks in two settings: (i) both expert trajectories and the agent's domain are $\mathcal{X}=\mathcal{Y}=\R^2$ and (ii) expert trajectories consist of a video sequence of $32\times 32$ images, giving $\c{Y}=\R^{32\times 32}$, whilst the agent's domain is $\c{X}=\R^2$.
In the first setting, DTW and GDTW apply, whilst in the second setting only GDTW can be used.
Figure \ref{fig:rewardcomp} displays the loss \eqref{eq:il_objective}, which is the GDTW distance to the given trajectory, obtained by learning with GDTW and DTW in (i) averaged across 20 seeds.
We see that in this fully-comparable setting, GDTW and DTW recover the spiral trajectory provided by the expert.

Finally, we consider a setting in which an agent living in $\R^2$ is provided with an expert trajectory $\v{y}_{\f{exp}} $ consisting of a video of a car driving through a spiral, illustrated in Figures \ref{fig:1}--\ref{fig:36} (prior to down-scaling the images). 
Here, the state-space of the agent, $\c{X}=\mathbb{R}^2$, differs from the state-space of the expert, $\c{Y}=\mathbb{R}^{32\times 32}$.
The cost on image space $d_{\c{Y}}$ is the $2$-Wasserstein distance, with images interpreted as densities on a grid. 
The cost on the Euclidean space $d_{\c{X}}$ is the Euclidean distance. 
Figure \ref{fig:policy} shows the agent's trajectory under the learned policy $\pi_\theta$, and Figure \ref{fig:rewardincomp} shows the loss \eqref{eq:il_objective} against the number of training steps.
Using GDTW, the agent successfully learns to solve the task despite never having access to trajectories in the space of interest.

\section*{Conclusion}

We propose Gromov DTW, a distance between time series living on potentially incomparable spaces. 
GDTW compares intra-relational geometries of the time series, alleviating the need for a ground metric to be defined on potentially incomparable spaces.
Moreover, GDTW is invariant under isometries by nature, which contributes to its versatility and is an important inductive bias for generalization.
We hope these contributions enable use of time series alignment in novel settings.

\section*{Acknowledgments}
We are grateful to K. S. Sesh Kumar for ideas on the Frank--Wolfe algorithm. SC was supported by
the Engineering and Physical Sciences Research Council (grant number EP/S021566/1).
\printbibliography

\newpage
\appendix
\onecolumn

\section{Theory}
\label{sec:app_theory}

\subsection*{Metric Properties}

Here we develop the theory of Gromov dynamic time warping distances.
We begin by introducing the necessary preliminaries.

\begin{definition}[Time series]
Let $(\c{X},d_{\c{X}})$ be a compact metric space, and let $I_{\c{X}} = \{1,2,..,T_{\c{X}}\} \subset \N$.
We call a finite sequence $\v{x} : I_{\c{X}} \-> \c{X}$ a \emph{time series}.
Let $X$ be the space of all time series.
\end{definition}

\begin{definition}
Let $\v{x}$ and $\v{y}$ be time series.
Define a pre-metric $D : \c{X} \x \c{Y} \-> \R$, which we call the \emph{cost}.
Define the $m \x n$ \emph{cost matrix} $\m{D} \in \R^{m\x n}$ by $D_{ij} = D(x_i, y_j)$.
\end{definition}

\begin{definition}
We say that a binary matrix $\m{A}$ is an \emph{alignment matrix} if $A_{11} = 1$, $A_{mn} = 1$, and $A_{ij} = 1$ implies exactly one of $A_{i-1,j} = 1$, $A_{i,j-1} = 1$, and $A_{i-1,j-1} = 1$ holds.
Let
\[
\c{A} = \cbr{\m{A} \in \{0,1\}^{m \x n} : \m{A} \mathrel{\text{is an alignment matrix}}}
\]
be the set of \emph{alignment matrices}.
\end{definition}

\begin{definition}[Dynamic Time Warping]
Let $\v{x}$ and $\v{y}$ be time series.
Define the \emph{dynamic time warping} distance by
\[
\f{DTW}(\v{x},\v{y}) = \min_{\m{A}\in\c{A}} \innerprod{\m{D}}{\m{A}}_{\f{F}},
\]
where $\innerprod{\cdot}{\cdot}_{\f{F}}$ is the Frobenius norm over real matrices.
\end{definition}

\begin{proposition}
If $D$ is a pre-metric, then $\f{DTW}: X \x X \-> \R$ is a pre-metric on the space of time series. 
If we take $c = d_{\c{X}}$, then $\f{DTW}: X \x X \-> \R$ is a symmetric pre-metric on $X$.
\end{proposition}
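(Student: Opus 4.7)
The plan is to verify separately (a) non-negativity and reflexivity (the pre-metric axioms) and (b) symmetry in the case $c = d_\c{X}$.

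For non-negativity, first I would note that since $D$ is a pre-metric, every entry of $\m{D}$ is non-negative, and since any $\m{A} \in \c{A}$ has $\{0,1\}$ entries, the Frobenius inner product $\innerprod{\m{D}}{\m{A}}_{\f{F}} = \sum_{ij} D_{ij} A_{ij}$ is a sum of non-negative terms; taking the minimum over $\c{A}$ preserves this bound. For reflexivity with $c = d_\c{X}$, I would exhibit the diagonal alignment matrix $\m{I}_m$ for $\v{y} = \v{x}$ of length $m$: its path of ones runs down the diagonal, meeting both boundary conditions ($A_{11} = A_{mm} = 1$) and the move restriction (each diagonal entry is reached by a single bottom-right diagonal step). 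Then $D_{ii} = d_\c{X}(x_i,x_i) = 0$ gives $\innerprod{\m{D}}{\m{I}_m}_{\f{F}} = 0$, which combined with non-negativity forces $\f{DTW}(\v{x},\v{x}) = 0$.

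For symmetry, the key observation is that transposition is a bijection $\c{A}(m,n) \to \c{A}(n,m)$ that preserves the alignment cost. I would first verify that if $\m{A} \in \c{A}(m,n)$ then $\m{A}^T \in \c{A}(n,m)$: the corners map correctly, and the "bottom / right / diagonal" move condition transposes to itself since $A^T_{i-1,j} = A_{j,i-1}$, $A^T_{i,j-1} = A_{j-1,i}$, and $A^T_{i-1,j-1} = A_{j-1,i-1}$, so exactly one of the three predecessors of $A^T_{ij}$ equals $1$ iff exactly one of the three predecessors of $A_{ji}$ does. Since $c = d_\c{X}$ is symmetric, the cost matrix for $(\v{y},\v{x})$ is $\m{D}^T$, and the Frobenius inner product is invariant under simultaneous transposition: $\innerprod{\m{D}^T}{\m{A}^T}_{\f{F}} = \innerprod{\m{D}}{\m{A}}_{\f{F}}$. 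Hence the two minimizations attain the same value, and $\f{DTW}(\v{x},\v{y}) = \f{DTW}(\v{y},\v{x})$.

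The main obstacle is only minor bookkeeping: the set $\c{A}$ is defined asymmetrically via the "bottom, right, diagonal" move rule, so some care is needed to check that its image under transposition coincides with $\c{A}(n,m)$ under the same rule rather than under a reversed rule. Once that transpose lemma is in hand, all three pre-metric properties follow in a few lines.
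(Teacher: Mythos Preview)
Your argument is correct. The paper itself does not supply a proof here at all: it simply cites \textcite{10.1016/j.patcog.2008.11.030} and moves on. So you have taken a genuinely different route, namely a self-contained verification of the pre-metric axioms rather than an appeal to the literature. What your approach buys is transparency---a reader sees exactly where each axiom is used---at the cost of a few extra lines; the paper's approach keeps the appendix short but requires the reader to track down the reference.

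One small remark on your write-up: you phrase reflexivity as ``for reflexivity with $c = d_{\c{X}}$'', but in fact $\f{DTW}(\v{x},\v{x}) = 0$ already follows from $D$ being a pre-metric, since any pre-metric satisfies $D(x_i,x_i)=0$. The hypothesis $c = d_{\c{X}}$ is needed only for the symmetry clause, exactly as the statement is structured. This is purely a wording issue and does not affect the validity of the argument.
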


\begin{proof}
\textcite{10.1016/j.patcog.2008.11.030}.
\end{proof}

A pre-metric induces a Hausdorff topology on the set it is defined over, and so is suitable for many purposes that ordinary metrics are used for.
To proceed along the path suggested by Gromov-Hausdorff and Gromov--Wasserstein distances over metric-measure spaces, we need to define the time series analog.

\begin{definition}
Define a \emph{metric space equipped with a time series} to be a triple $(\c{X},d_{\c{X}},\v{x})$.
\end{definition}

\begin{definition}
\label{def:gdtw-isom}
Let $(\c{X},d_{\c{X}},\v{x})$ and $(\c{Y},d_{\c{Y}},\v{y})$ be metric spaces equipped with time series.
Define $X|_{\v{x}} = \{x \in X : x \in \f{img}\v{x}\}$, and $Y|_{\v{y}}$ similarly, and equip both sets with their respective subset metrics.
We say that $(\c{X},d_{\c{X}},\v{x})$ and $(\c{Y},d_{\c{Y}},\v{y})$ are \emph{isomorphic} if there is a metric isometry $\phi: X|_{\v{x}} \-> Y|_{\v{y}}$ such that $\phi(\widehat{x}_i) = \widehat{y}_i$, where $\widehat{\v{x}}$ and $\widehat{\v{y}}$ denote $\v{x}$ and $\v{y}$ with consecutive repeated elements removed.
\end{definition}

At this stage it is not clear whether or not the class of all such triples under isometry forms a set, or is instead a proper class.
To avoid set-theoretic complications, we need the following technical result.

\begin{result}
The class of all isometry classes of compact metric spaces is a set.
\end{result}

\begin{proof}
\textcite[ch. 27, p. 746]{alma991005863149705596}.
\end{proof}

It follows immediately that the class of all metric spaces equipped with time series is a set, provided that identification by isometry extends to the time series.
We are now ready to define GDTW.

\begin{definition}
Let $\c{L}$ be a pre-metric on $\R^+$, and define $\c{L} \in \R^{m\x n \x m \x n}$ by
\[
\c{L}_{ijkl} = \c{L}\del[1]{d_{\c{X}}(x_i,x_k), d_{\c{Y}}(y_j,y_l)}
.
\]
Define the \emph{Gromov Dynamic Time Warping} distance by
\[
\f{GDTW}\del[1]{(\c{X}, d_{\c{X}}, \v{x}), (\c{Y}, d_{\c{Y}}, \v{y})} = \min_{\m{A}\in\c{A}} \innerprod{\c{L} \ox \m{A}}{\m{A}}_{\f{F}},
\]
where $(\c{L} \ox \m{A})_{ij} = \sum_{kl} L_{ijkl} A_{kl}$.
\end{definition}

\begin{proposition}
$\f{GDTW}$ is a pre-metric on the set of all metric spaces equipped with time series up to isometry.
\end{proposition}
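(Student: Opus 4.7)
The plan is to verify the three defining properties of a pre-metric, preceded by a well-definedness check, and to handle the non-trivial part of the identity of indiscernibles with care. Throughout, it is convenient to write the objective as $\sum_{ijkl} \c{L}\del[1]{d_{\c{X}}(x_i,x_k),d_{\c{Y}}(y_j,y_l)} A_{ij}A_{kl}$.

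\textbf{Well-definedness under isometry.} First, I would check that the value of $\f{GDTW}$ depends only on the isomorphism classes of $(\c{X},d_{\c{X}},\v{x})$ and $(\c{Y},d_{\c{Y}},\v{y})$ from Definition \ref{def:gdtw-isom}. The four-dimensional array $\c{L}$ depends on the time series only through the pairwise distances $d_{\c{X}}(x_i,x_k)$ and $d_{\c{Y}}(y_j,y_l)$, both of which are preserved by any metric isometry $\phi$. The only subtlety is that isomorphism is defined up to removal of consecutive repeated elements; but inserting or removing a repeated element amounts to duplicating a row or column in any alignment matrix, which does not change the objective value since the repeated index contributes the same pairwise distances. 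Consequently the infimum over alignment matrices is invariant across representatives.

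\textbf{Non-negativity and symmetry.} Property (a), i.e.\ $\f{GDTW}\geq 0$, is immediate since $\c{L}$ is a pre-metric on $\R^+$ hence non-negative, and $A_{ij}A_{kl}\in\{0,1\}$. Property (c), symmetry when $\c{L}$ is symmetric, follows by noting that the map $\m{A}\mapsto\m{A}^\top$ is a bijection of $\c{A}(T_x,T_y)$ onto $\c{A}(T_y,T_x)$ (alignment paths from the top-left to the bottom-right corner with bottom/right/diagonal moves are preserved under transpose), and the objective is invariant under this swap when $\c{L}(a,b)=\c{L}(b,a)$.

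\textbf{Identity of indiscernibles.} For $\f{GDTW}(\v{x},\v{x})=0$, choose $\m{A}$ to be the diagonal alignment of $\v{x}$ with itself; then whenever $A_{ij}A_{kl}\neq 0$ one has $i=j$ and $k=l$, so the summand is $\c{L}\del[1]{d_{\c{X}}(x_i,x_k),d_{\c{X}}(x_i,x_k)}=0$, using that $\c{L}$ is a pre-metric. Conversely, suppose $\f{GDTW}\del[1]{(\c{X},d_{\c{X}},\v{x}),(\c{Y},d_{\c{Y}},\v{y})}=0$ and let $\m{A}^*$ be an optimal alignment; every term in the sum vanishes, so for all pairs $(i,j),(k,l)$ with $A^*_{ij}=A^*_{kl}=1$, the fact that $\c{L}$ is a pre-metric forces $d_{\c{X}}(x_i,x_k)=d_{\c{Y}}(y_j,y_l)$. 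The support of $\m{A}^*$ is a monotone path from $(1,1)$ to $(T_x,T_y)$ that visits every row and every column at least once; after collapsing consecutive repetitions in both $\v{x}$ and $\v{y}$, this support induces a bijection $\phi$ between $X|_{\widehat{\v{x}}}$ and $Y|_{\widehat{\v{y}}}$ sending $\widehat{x}_i\mapsto\widehat{y}_i$, and the distance-preservation above says exactly that $\phi$ is an isometry. This yields property (b), and together with (a) and (c) shows that $\f{GDTW}$ is a pre-metric on isomorphism classes.

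The main obstacle is the converse of (b): one has to argue that the support of an optimal alignment with zero cost descends, after collapsing consecutive repetitions, to a genuine bijection between the image sets, and that the induced map is well-defined (independent of which occurrence along the path one picks). This reduces to showing that if $A^*_{ij}=A^*_{ij'}=1$ for some $i$ and $j\neq j'$, then $y_j$ and $y_{j'}$ must be equidistant from every $y_l$ appearing in the support, by the vanishing of the cost; combined with the fact that the time series are identified up to consecutive repetition, this forces $y_j=y_{j'}$ (or that they collapse in $\widehat{\v{y}}$), and analogously in the other direction. Once this is established, the isometry and hence property (b) follow.
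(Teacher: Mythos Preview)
Your overall structure is sound, and for the nontrivial direction (the converse of the identity of indiscernibles) your argument is correct: once the optimal cost is zero, taking $(k,l)=(i,j)$ in your equidistance observation gives $d_{\c{Y}}(y_{j'},y_j)=d_{\c{Y}}(y_j,y_j)=0$, forcing $y_j=y_{j'}$; so horizontal or vertical moves in the optimal path can only occur at consecutive repeats, and after collapsing these the alignment becomes the identity. It would help to make this last step explicit (you currently stop at ``equidistant'').

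The paper organizes the same idea differently. It first passes to $\widehat{\v{x}},\widehat{\v{y}}$ (declaring this ``without loss of generality'') and then argues inductively, cell by cell, that the optimal alignment must be the identity matrix: from $A_{11}=1$ it tests $A_{21}$ via the single term $\c{L}_{2111}$, derives $d_{\c{X}}(x_2,x_1)=d_{\c{Y}}(y_1,y_1)=0$, obtains a contradiction with the no-duplicates assumption, and iterates down the diagonal. Your route reaches the same conclusion by directly characterizing what a stalled path forces, without an explicit induction; this is slightly more alignment-path bookkeeping but avoids the WLOG reduction the paper invokes.

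One genuine correction: your well-definedness claim that ``duplicating a row or column does not change the objective value'' is false for GDTW. Because the cost $\innerprod{\m{L}\ox\m{A}}{\m{A}}_{\f{F}}$ is \emph{quadratic} in $\m{A}$, inserting a repeated point adds a full row and column of nonzero cross-terms to the sum, not merely a zero diagonal entry; the value of the objective at a given alignment genuinely changes (try $\v{x}=(a,b)$ versus $\v{x}'=(a,a,b)$ against the same $\v{y}$). The paper does not address well-definedness on equivalence classes at all; what both arguments do establish is that $\f{GDTW}=0$ precisely characterizes the equivalence class, which is what the pre-metric claim actually needs.
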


\begin{proof}
We check the conditions.
Non-negativity is immediate by definition.
It also follows immediately that $(\c{X},d_{\c{X}},\v{x}) \isom (\c{Y},d_{\c{Y}},\v{y})$ implies $\f{GDTW}\del[1]{(\c{X},d_{\c{X}},\v{x}), (\c{Y},d_{\c{Y}},\v{y})} = 0$.
We thus need to prove that $\f{GDTW}\del[1]{(\c{X},d_{\c{X}},\v{x}), (\c{Y},d_{\c{Y}},\v{y})} = 0$ implies $(\c{X},d_{\c{X}},\v{x}) \isom (\c{Y},d_{\c{Y}},\v{y})$.
By hypothesis, we have 
\[
\f{GDTW}\del[1]{(\c{X},d_{\c{X}},\v{x}), (\c{Y},d_{\c{Y}},\v{y})} = \sum_{ijkl} A_{ij} \c{L}_{ijkl} A_{kl} = \sum_{\substack{A_{ij}=1\\A_{kl}=1}} \c{L}_{ijkl},
\]
where all elements of the last sum are non-zero.
Suppose without loss of generality that $\v{x}$ and $\v{y}$ contain no duplicate elements.
We argue inductively that optimal $\m{A}$ is the identity matrix.
\1 First, note that $A_{11} = 1$ by definition of $\m{A}$.
\2 Now, consider $A_{21}$. 
If we suppose $A_{21} = 1$, then we must have $\c{L}_{2111} = 0$, and hence $d_{\c{X}}(x_2,x_1) = d_{\c{Y}}(y_1,y_1) = 0$.
But then $x_2 = x_1$, contradicting the assumption there are no duplicates.
Hence, $A_{21} = 0$.
\3 By mirroring the above argument, $A_{12} = 0$.
Hence, by definition of $\m{A}$, the only remaining possibility is $A_{22} = 1$.
Inductively, we conclude $A_{ii} = 1$ for all $i$, and $A_{ij} = 0$ for $i \neq j$.
\4 
Finally, since the lower-right corner of $\m{A}$ has to also be equal to one by definition, it follows that $\m{A}$ is the square identity matrix.
\0 
Hence $A_{ij} = 1$ and $A_{kl} = 1$ if and only if $i=j$ and $k=l$.
Plugging this into the previous equality yields $d_{\c{X}}(x_i,x_k) = d_{\c{Y}}(y_i,y_k)$ for all $i,k$, which together with diagonal $\m{A}$ gives the isomorphism. 
Finally, to see that lack of duplicates truly is assumed without loss of generality, note that if there are duplicates in $\v{x}$ and $\v{y}$, then we apply the above argument to $\widehat{\v{x}}$ and $\widehat{\v{y}}$ of Definition \ref{def:gdtw-isom}, which no longer contain duplicates.
The claim follows.
\end{proof}

One can easily see that $\f{GDTW}$ will be symmetric if $L$ is symmetric.
Since $\f{DTW}$ itself doesn't satisfy a triangle inequality \cite{10.1016/j.patcog.2008.11.030}, $\f{GDTW}$ won't satisfy it either.

\subsection*{Barycenter Computation}

\begin{proposition}
If $\c{L}$ is a square error loss, the solution to the minimization in~\eqref{eq:bary_alter} for fixed $\m{A}_j$ is
\[
\m{D}=\sum_{j=1}^J \alpha_j \m{A}_j^T\m{D}_{\v{x}_j}\m{A}_j\Big \slash \sum_{j=1}^J \alpha_j(\m{A}_j \v{1})(\m{A}_j \v{1})^T,
\]
where division $\cdot \slash \cdot$ is performed element-wise, and $\v{1}$ is a vector of ones.
\end{proposition}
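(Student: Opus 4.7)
The plan is to exploit the fact that with squared-error loss $\mathcal{L}(a,b)=(a-b)^2$, the objective in~\eqref{eq:bary_alter} becomes a sum of quadratics that, crucially, decouples across the entries of $\m{D}$. Expanding the Frobenius inner product gives
\[
\sum_{j=1}^J \alpha_j \sum_{i,k,a,b} \bigl(D_{ik}-(D_{\v{x}_j})_{ab}\bigr)^2 (A_j)_{ia}(A_j)_{kb},
\]
and since each entry $D_{ik}$ appears only in the terms indexed by that specific $(i,k)$, I would minimize entrywise over the scalars $D_{ik}$.

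The first-order optimality condition for $D_{ik}$ reads
\[
\sum_{j=1}^J \alpha_j \sum_{a,b} \bigl(D_{ik}-(D_{\v{x}_j})_{ab}\bigr)(A_j)_{ia}(A_j)_{kb} = 0,
\]
which rearranges into $D_{ik}$ times a nonnegative coefficient on the left equaling a double contraction on the right. Identifying $\sum_a (A_j)_{ia} = (\m{A}_j \v{1})_i$ turns the left coefficient into the $(i,k)$ entry of $\sum_j \alpha_j (\m{A}_j\v{1})(\m{A}_j\v{1})^T$, while the contraction of $\m{D}_{\v{x}_j}$ against the appropriate rows of $\m{A}_j$ assembles into the $(i,k)$ entry of $\sum_j \alpha_j \m{A}_j^T\m{D}_{\v{x}_j}\m{A}_j$. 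Dividing entrywise then yields the claimed closed form.

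To confirm this critical point is the global minimum rather than a saddle, I would observe that each scalar subproblem is a quadratic in $D_{ik}$ with nonnegative leading coefficient $\sum_j \alpha_j (\m{A}_j\v{1})_i(\m{A}_j\v{1})_k$; it is strictly convex exactly when this coefficient is positive, which is precisely the regime in which the elementwise division in the statement is well-defined (and zero-coefficient entries of $\m{D}$ are unconstrained by the objective and can be set arbitrarily).

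There is no substantive obstacle: the argument is a direct first-order optimality computation enabled by the fact that the squared-error Gromov loss is separable once the alignments $\m{A}_j$ are fixed. The only care needed is bookkeeping of index conventions---specifically, whether $\m{A}_j$ is viewed as $T\times T_j$ or $T_j\times T$---to ensure the matrix triple product in the numerator is transposed in the direction stated; the identification $\sum_a (A_j)_{ia}=(\m{A}_j\v{1})_i$ then clinches both numerator and denominator.
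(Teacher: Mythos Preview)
Your proposal is correct and follows essentially the same approach as the paper: both set the derivative of the quadratic objective in $\m{D}$ to zero and solve. The only cosmetic difference is that the paper first rewrites the squared-loss Gromov term via the matrix factorization $\innerprod{\m{D}\odot\m{D}\,\m{A}_j\v{1}\v{1}^T + \v{1}\v{1}^T\m{A}_j\m{D}_{\v{x}_j}\odot\m{D}_{\v{x}_j} - 2\m{D}\m{A}_j\m{D}_{\v{x}_j}^T}{\m{A}_j}_{\f{F}}$ before differentiating, whereas you work entrywise and additionally supply the convexity check; your caution about the transpose convention in the numerator is well placed.
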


\begin{proof}
If $\c{L}$ is square error loss, then \eqref{eq:bary_alter} can be written as
\[
\min_{\m{D}} \sum_{j=1}^{J}\alpha_j \innerprod{\m{D}\odot \m{D}\m{A}_j\v{1}\v{1}^T+\v{1}\v{1}^T\m{A}_j\m{D}_{\v{x}_j}\odot \m{D}_{\v{x}_j}-2\m{D}\m{A}_j\m{D}^T_{\v{x}_j}}{\m{A}_j}_{\f{F}},
\]
where $\odot$ is element-wise matrix multiplication. 
Differentiating the objective with respect to $\m{D}$ and setting it equal to $0$, we get
\[
\m{D} \odot \left(\sum_{j=1}^J\alpha_j  (\m{A}_j \v{1})(\v{1}^T \m{A}_j^T)\right)= \sum_{j}\alpha_j \m{A}_j^T \m{D}_{\v{x}_j}\m{A}_j,
\]
which, dividing both sides element-wise, gives the result.
\end{proof}

\section{Experimental Details}
\label{sec:apdxexp}

\subsection*{Alignments}
In Figures \ref{fig:ali_app_1}--\ref{fig:ali_app_4}, we provide further alignment experiments. Note that in this extra set of experiments, we consider the only rotationally invariant proposal of \textcite{vayer2020time}.
Here, we set the entropic term $\gamma$ to $1$ for soft alignments, and we use normalized distance matrices. 
We observe that GDTW and soft GDTW are robust to scaling, rotations and translations, whilst DTW and soft DTW are sensitive to rotations and translations. 
Finally, DTW-GI (rotation) is robust to rotations, but sensitive to translations, which further corroborates the observations from Figure \ref{fig:alignment_exp}. 

\subsection*{Barycenters}
In this experiment, we perform barycenters of 30 elements of 4 quickdraw classes with respect to DTW, DTW-GI and GDTW.

\paragraph{Data selection and pre-processing.} The classes considered in the experiment are \emph{fish}, \emph{blueberries}, \emph{clouds} and \emph{hands}. 
The variability in each class of QuickDraw is extremely high: we created datasets of 30 elements such that it is straightforward to recognize to which category the element belongs to, such that the element is drawn with a single stroke and such that it has a common style. 
The full datasets are displayed in Figure \ref{fig:barycenters}. 
Before running the algorithms, we rescale the data, applying the transformation $\v{x} \mapsto (\v{x} - \min(\v{x})) / \max(\v{x})$ to each data point.
Finally, we down-sample the length of the time series reducing it by 1/3 for \emph{hands} and  1/2 for \emph{fish}, \emph{clouds} and \emph{blueberries}. 

\paragraph{Algorithms.}
For GDTW barycenters, we apply the algorithm of Section \ref{subsec:bary}, using the entropy regularized version of GDTW with $\gamma=1$.
For DTW and DTW-GI, we use standard DBA procedures. 
For both algorithms, we set the barycentric length to 60 for \emph{fish} and \emph{hands} and 40 for \emph{clouds} and \emph{blueberries}. 
We set the maximum number of FW iterations for GDTW to 25, and the number of DTW-GI iterations to 30.

\subsection*{Generative Modeling}
In this experiment, we use the Sinkhorn divergence objective.
We use a latent dimension of $15$, and the generator is a $4$-layer MLP with $1000$ neurons per layers. 
The length of the generated time series is set to $T=40$, and the dimension of the space is $p=2$, thus the MLP's output dimension is $T\times p=80$. 
We set the batch size to $25$. 
We use the ADAM optimizer, with $\v\beta=(0.5,0.99)$, and the learning rate set to $5\times 10^{-5}$. 
We set $\gamma=1$, and the maximum number of iterations in the GDTW computation to $10$.
We use the sequential MNIST dataset\hyperref[ftn:code]{\footnotemark[3]} and normalize the data, which is a time series in $\mathbb{R}^2$, into the unit square. 

\subsection*{Imitation Learning}
In this experiment, we use a two-layer MLP policy, with input dimension of $\dim(\mathcal{X})$, a hidden dimension of 64, and an output dimension of $2$. 
The learning rate is set to $5\times 10^{-5}$, and we use the ADAM optimizer with $\v\beta=(0.5,0.99)$. 
In the video/2D experiment,\hyperref[ftn:code]{\footnotemark[4]} the ground cost for the video is entropic 2-Wasserstein distance, computed efficiently using \textsc{GeomLoss} \cite{feydy2019interpolating}, and the ground cost on the 2D space is squared error loss. We plot mean scores along with standard deviations (across 20 random seeds).

\begin{figure*}[b!]
\vspace*{-1ex}
\footnoterule
\footnotesize
\quad \footnotemark[3]Sequential MNIST can be found at \url{https://github.com/edwin-de-jong/mnist-digits-stroke-sequence-data}.
\\
\strut\quad \footnotemark[4]The video was generated using \url{https://github.com/gezichtshaar/PyRaceGame}.
\label{ftn:code_apdx}
\end{figure*}

\begin{figure}\label{fig:dataset_blueb}
  \centering
  \includegraphics[width=\hsize]{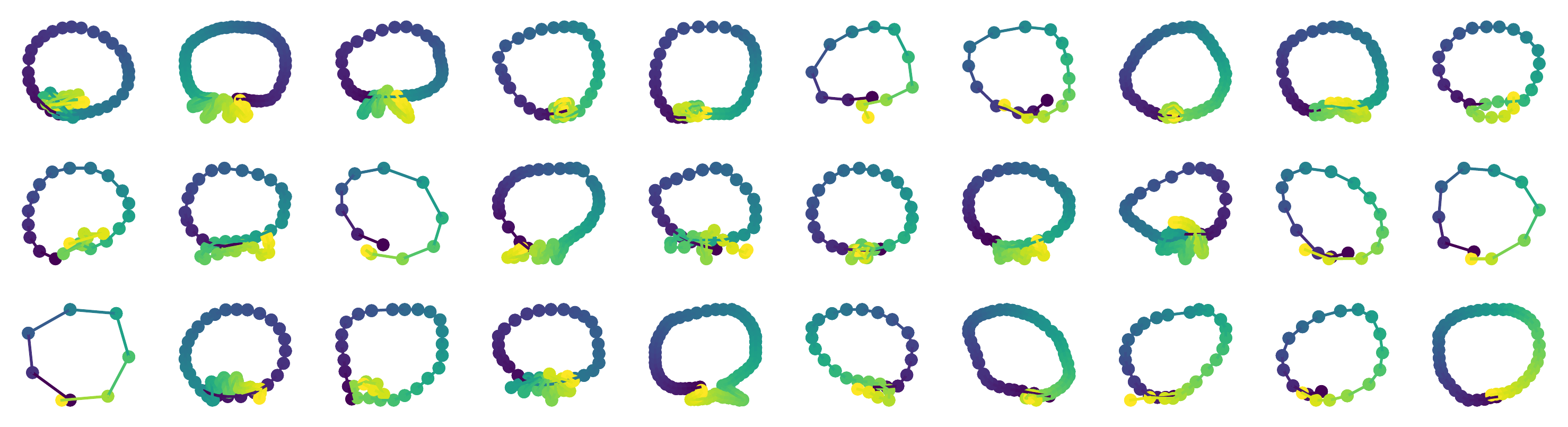}
  \includegraphics[width=\hsize]{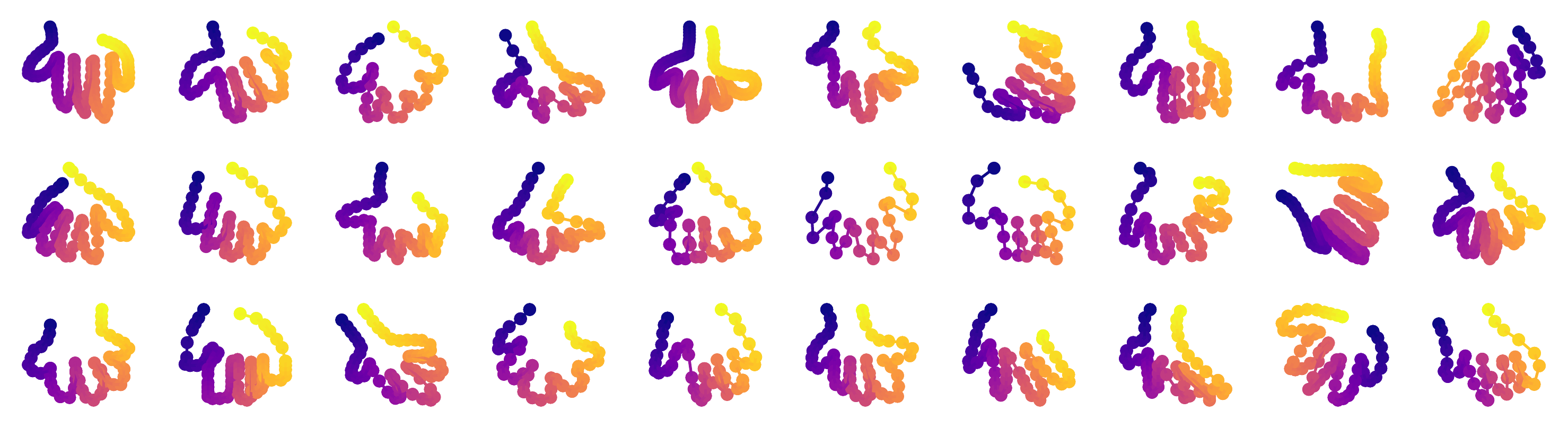}
  \includegraphics[width=\hsize]{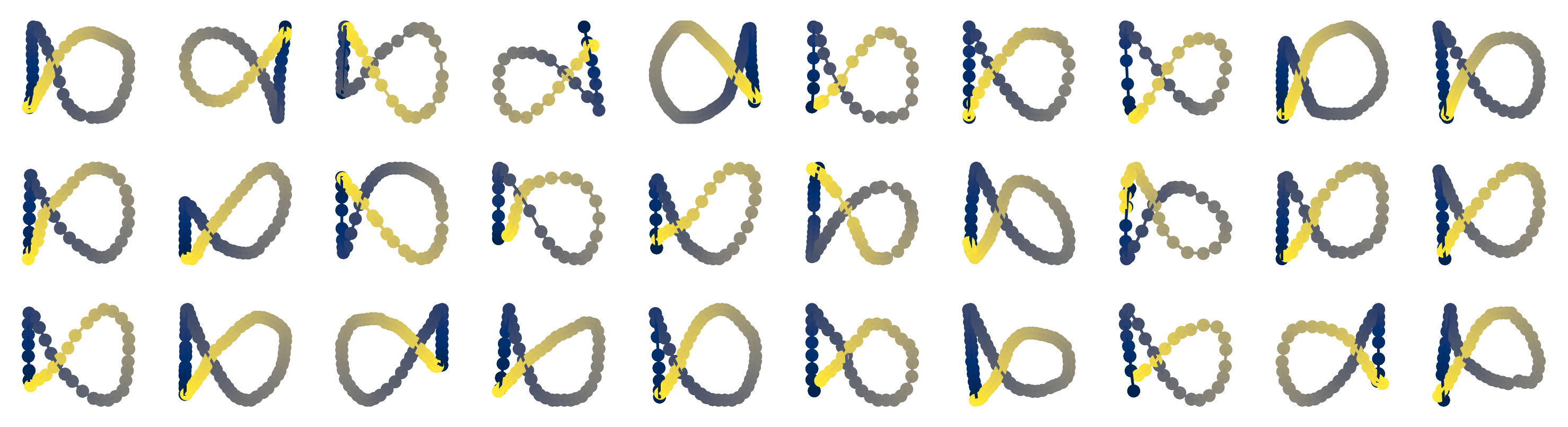}
  \includegraphics[width=\hsize]{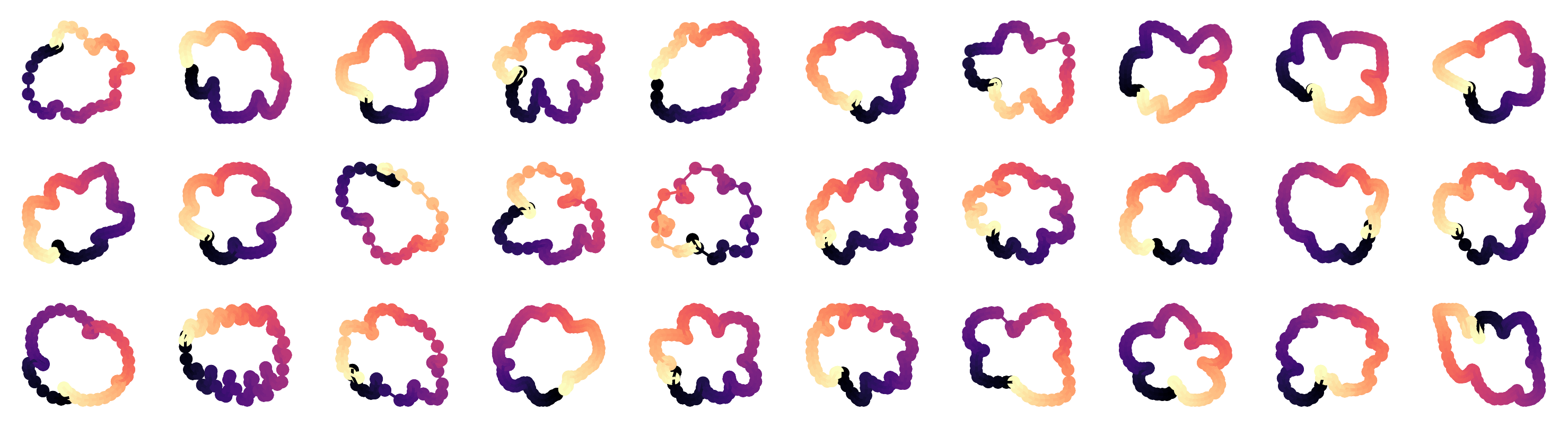}
  \caption{Quickdraw datasets, with classes \emph{blueberries}, \emph{hands}, \emph{fishes}, \emph{clouds}.}
  \label{fig:barycenters}
  \end{figure}

\begin{figure}
    \centering
    \includegraphics{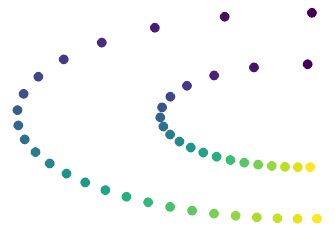}\\[5ex]
    \includegraphics{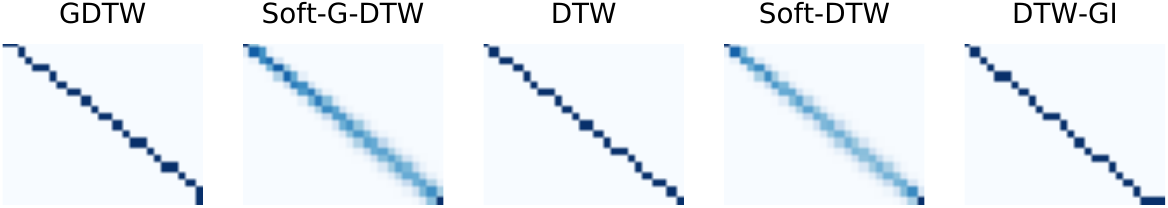}
    \includegraphics{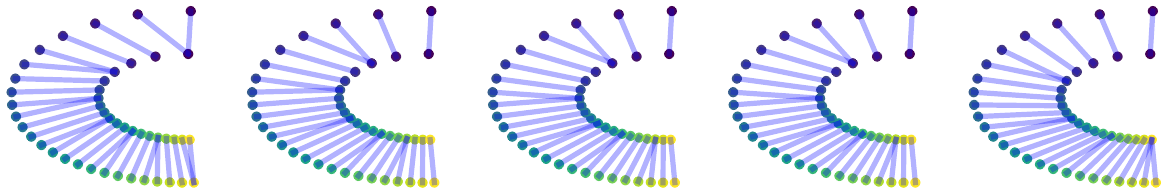}
    \caption{Two time series (top) along with alignment matrices (middle) and alignments with different approaches.  In this example, all methods provide a sensible alignment because the time series are on the same axis of rotation and close in the ground space.  }
    \label{fig:ali_app_1}
\end{figure}

\begin{figure}
    \centering
    \includegraphics{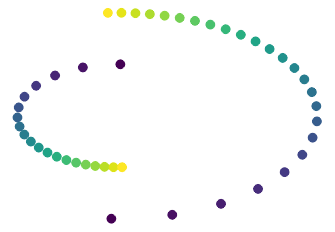}\\[5ex]
    \includegraphics{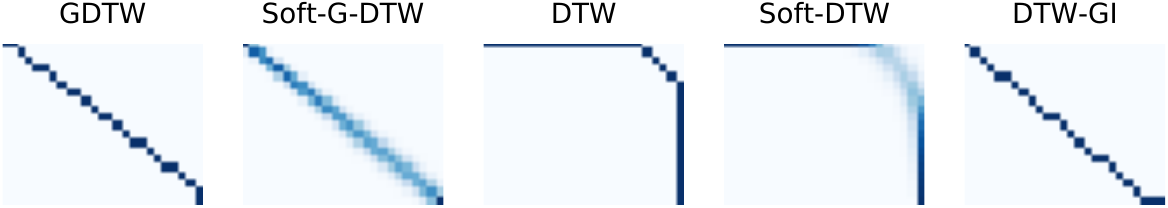}
    \includegraphics{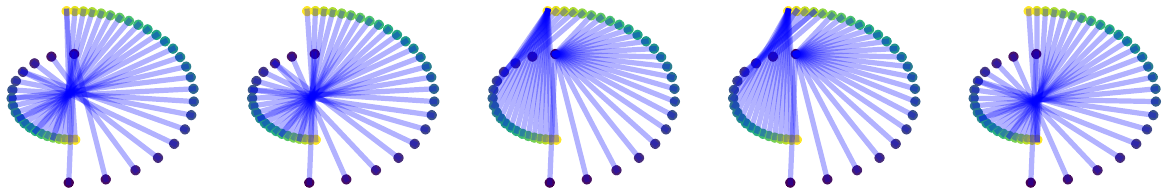}
    \caption{Two time series (top), alignment matrices (middle) and alignments with different approaches.  In this example, the time series are not on the same rotation axis which makes DTW variants fail, whilst GDTW and DTW-GI  (rotation) provide good alignments due to rotational invariance.}
    \label{fig:ali_app_2}
\end{figure}
\clearpage
\begin{figure}
    \centering
    \includegraphics{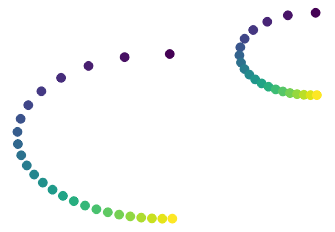}\\[5ex]
    \includegraphics{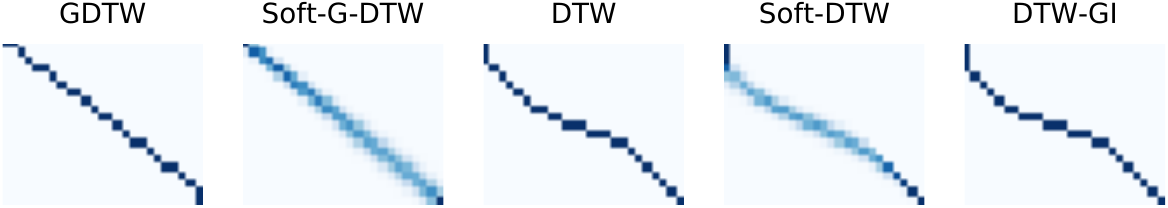}
    \includegraphics{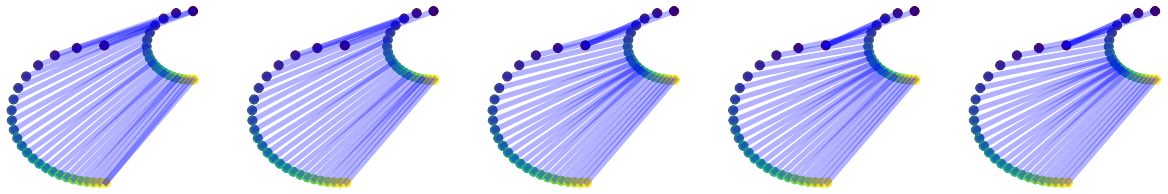}
    \caption{Two time series (top) along with alignment matrices (middle) and alignments with different approaches.  In this example, the time series are translated which makes DTW variants and DTW-GI  (rotation) fail, whilst GDTW is invariant to all isometries, and is thus robust to such transformation.}
    \label{fig:ali_app_3}
\end{figure}

\begin{figure}
    \centering
    \includegraphics{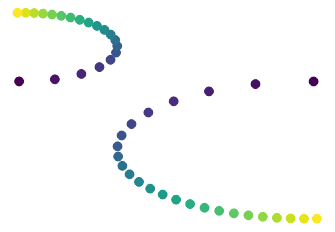}\\[5ex]
    \includegraphics{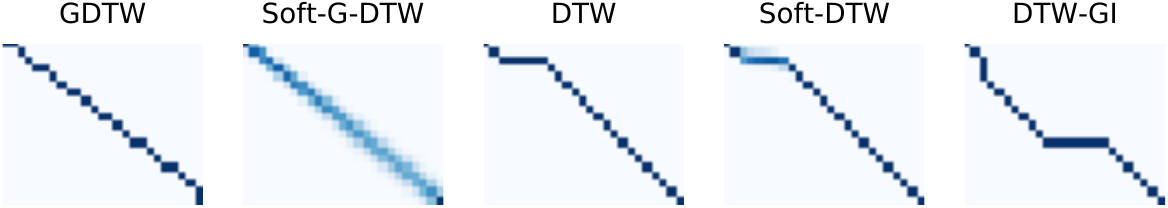}
    \includegraphics{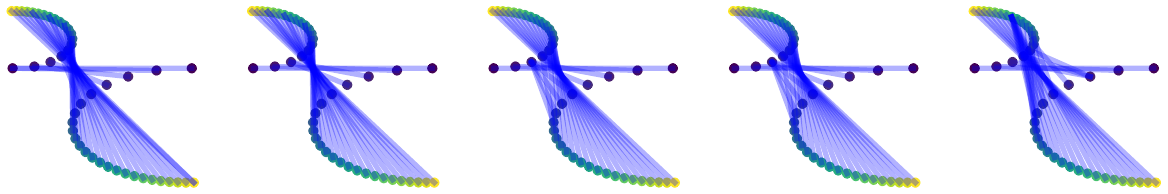}
    \caption{Two time series (top) along with alignment matrices (middle) and alignments with different approaches.  In this example, the time series are rotated and translated which makes DTW variants and DTW-GI  (rotation) fail, whilst GDTW is invariant to all isometries, and is thus robust to such transformations.}
    \label{fig:ali_app_4}
\end{figure}

\end{document}